\documentclass{article}

\usepackage{dilab_arxiv}

\usepackage{amsfonts}
\usepackage{ifthen}
\usepackage{textcomp}
\usepackage{mathtools}
\usepackage{bbm,dsfont} 
\usepackage{algorithm}
\usepackage{wrapfig}
\usepackage{algorithmic}
\usepackage{tikz}
\usetikzlibrary{arrows.meta,positioning}

\usepackage{CJKutf8} 

\usepackage[capitalise,nameinlink]{cleveref}

\let\standardEqref\eqref

\usepackage{amsmath,amsfonts,bm,bbm}

\Crefname{ALC@unique}{Line}{Lines}

\Crefname{algorithm}{Algorithm}{Algorithms}
\Crefname{assumption}{Assumption}{Assumptions}
\Crefname{lemma}{Lemma}{Lemmas}
\Crefname{proposition}{Proposition}{Propositions}
\Crefname{corollary}{Corollary}{Corollaries}
\Crefname{theorem}{Theorem}{Theorems}
\Crefname{definition}{Definition}{Definitions}
\Crefname{remark}{Remark}{Remarks}

\Crefformat{equation}{Eq. #2(#1)#3}
\Crefrangeformat{equation}{Eqs. #3(#1)#4 to #5(#2)#6}
\Crefmultiformat{equation}{Eqs. #2(#1)#3}{ and #2(#1)#3}{, #2(#1)#3}{ and #2(#1)#3}
\Crefrangemultiformat{equation}{Eqs. #3(#1)#4 to #5(#2)#6}{ and #3(#1)#4 to #5(#2)#6}{, #3(#1)#4 to #5(#2)#6}{ and #3(#1)#4 to #5(#2)#6}

\def\eqref#1{equation~\ref{#1}}

\def\1{\mathbbm{1}}

\def\vu{{\bm{u}}}

\def\vw{{\bm{w}}}

\DeclareMathAlphabet{\mathsfit}{\encodingdefault}{\sfdefault}{m}{sl}
\SetMathAlphabet{\mathsfit}{bold}{\encodingdefault}{\sfdefault}{bx}{n}

\def\gF{{\mathcal{F}}}

\def\gL{{\mathcal{L}}}

\newcommand{\wh}{\widehat}

\newcommand{\E}{\mathbb{E}}

\newcommand{\R}{\mathbb{R}}

\newcommand{\cP}{\mathcal{P}}

\newif\ifsup\supfalse
\suptrue

\DeclareMathOperator*{\argmax}{argmax}

\newcommand{\BPS}{\texttt{BPS}}

\let\eqref\standardEqref

\theoremstyle{plain}
\newtheorem{theorem}{Theorem}
\newtheorem{lemma}{Lemma}
\newtheorem{proposition}{Proposition}
\newtheorem{corollary}{Corollary}
\theoremstyle{definition}
\newtheorem{assumption}{Assumption}

\theoremstyle{remark}

\newenvironment{IEEEproof}[1][\proofname]{\begin{proof}[#1]}{\end{proof}}
\newcommand{\appendices}{\appendix}

\newcommand{\revise}[1]{#1}

\newcommand{\compilehidecomments}{false}
\ifthenelse{ \equal{\compilehidecomments}{true} }{%
	\newcommand{\yu}[1]{}
    \newcommand{\longbo}[1]{}
    
    \colorlet{ruishuocolor}{black}
}{
	\newcommand{\yu}[1]{{\color{cyan}[\text{Yu:} #1]}}
    \newcommand{\longbo}[1]{{\color{orange}[\text{Longbo:} #1]}}
    \definecolor{ruishuocolor}{RGB}{0,155,80}
    
}

\title{Optimal Skill Selection for LLM Agents with Provable Bicriteria Guarantees}
\runningtitle{Optimal Skill Selection for LLM Agents}
\date{arXiv preprint, \today}

\paperlogo{\includegraphics[height=1.5cm]{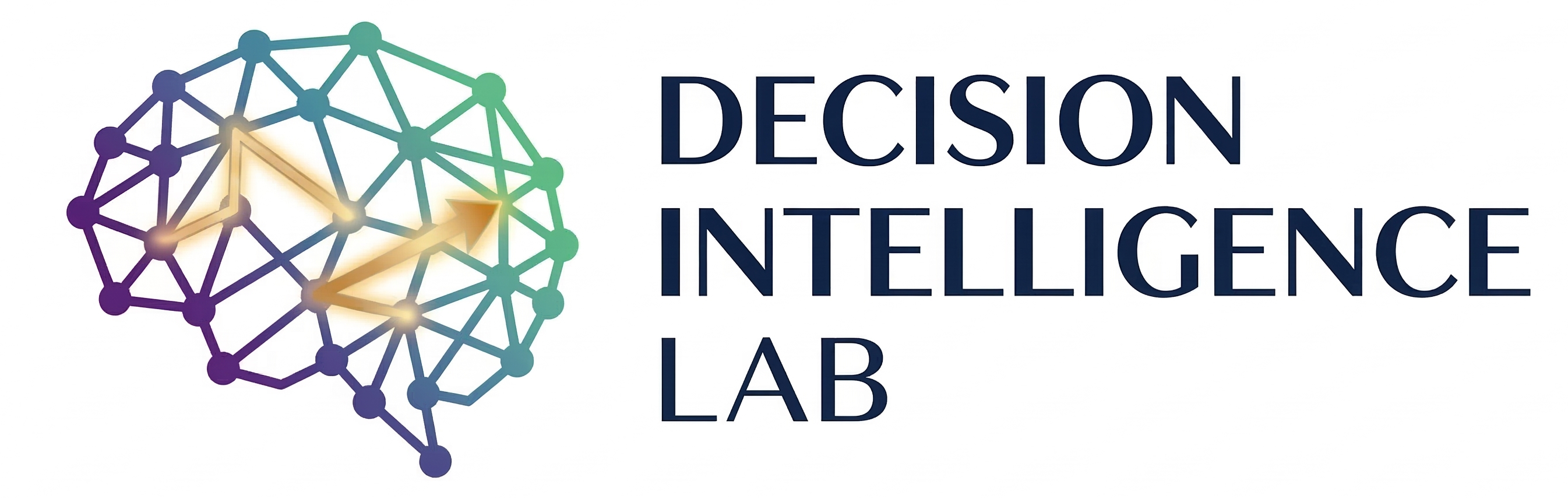}}

\author{
  Yu Chen$^{1,\star}$, Ruishuo Chen$^{1,\star}$, Xun Wang$^{1}$, Zhuoran Li$^{1}$, and Longbo Huang$^{1\,\text{\faEnvelope}}$
  \\[0.3em]\normalfont
  $^1$Institute for Interdisciplinary Information Sciences, Tsinghua University
  \\
  \text{\faEnvelope}\ Correspondence: longbohuang@tsinghua.edu.cn \\
  $^\star$ Equal contribution, listed in random order. 
}

\begin{document}
\begin{CJK*}{UTF8}{gbsn} 

\maketitle
\thispagestyle{fancy}

\begin{abstract}
Loading reusable skill documents into a bounded context window is now the primary way large language model (LLM) agents acquire task-specific capabilities, which makes skill selection a first-order determinant of task performance and token cost.
Yet current agents score skills independently by semantic relevance and assemble the set by top-\(k\) or greedy packing, with no quality guarantee or cost awareness on the selected set. As a result, redundant or poorly chosen skills waste scarce context tokens and can even degrade performance.
We give the first model of how the selected skill set shapes execution outcomes and cast skill selection as an optimization problem: choose a skill set under a hard token budget to maximize a monotone submodular benefit minus context penalty.
For this problem, we develop Best Prefix Selection (\BPS{}), a polynomial-time algorithm, and prove, to our knowledge, the first performance guarantee for skill selection: a bicriteria \((1-1/e,1)\) approximation whose benefit coefficient is optimal in polynomial time.
On a contamination-controlled BigCodeBench variant, \BPS{} outperforms all the baselines, reaching \revise{\(0.73\)} measured task success versus \(0.20\)--\(0.52\) for released skill routers, text retrievers, and the executor's own selection, on \(28\%\) fewer tokens than the strongest released router.
\end{abstract}

\section{Introduction}
\label{sec:introduction}

Large language model (LLM) agents increasingly rely on reusable skill documents to acquire task-specific capabilities beyond their parametric knowledge \cite{yang2026surveyagentskills}, and public skill registries already list tens of thousands of installable skills \cite{cho2026skillret,gao2026skillreducer}.
Modern production agents such as Codex \cite{openai2026skills} and Claude Code \cite{anthropicagentskills} deploy skills through a two-stage mechanism of \emph{selection} and \emph{execution}: the LLM reviews each installed skill's metadata (name and description) and selects skills according to the query, then loads only the selected skill documents into its context window to solve the task. 

However, both stages are limited by the model's finite context window: the context cost of selection scales with the size of the installed skill library, whereas execution consumes additional context for the selected skill documents and task input. As skill libraries grow to hundreds or thousands of entries \cite{cho2026skillret,zheng2026skillrouter}, the metadata alone can exceed the available context budget, making exhaustive LLM-based selection infeasible \cite{gan2025ragmcp}.
Furthermore, poor skill selection has measurable consequences for downstream execution: empirical evidence shows that selecting the wrong skills cuts pass rates by up to $21\%$ as libraries grow \cite{song2026moreskills}, and selected skills can even push success below the no-skill baseline on $13$ of $87$ benchmark tasks despite curation \cite{li2026skillsbench}.

These limitations of modern agent architectures have driven growing interest in dedicated skill-selection mechanisms, ranging from per-skill retrieval and routing to set-aware packing and context construction \cite{gan2025ragmcp,fore2024geckopt,zheng2026skillrouter,zheng2026skillselectserve,li2026skillsinjector}. However, without a principled formulation to guide selection, these methods largely follow a common heuristic template: each skill is scored independently by semantic relevance or a learned preference, and the selected set is then assembled using rules such as top-$k$ \cite{qu2024colt,li2026agentskillos}, truncation \cite{liu2026graphofskills}, or greedy packing \cite{zheng2026skillselectserve}, \revise{ignoring the context cost that loading the selected skill documents imposes at execution time, and} leaving capability overlap and complementarity among the selected skills unmodeled.

\begin{figure}
    \centering
    \includegraphics[width=.8\linewidth]{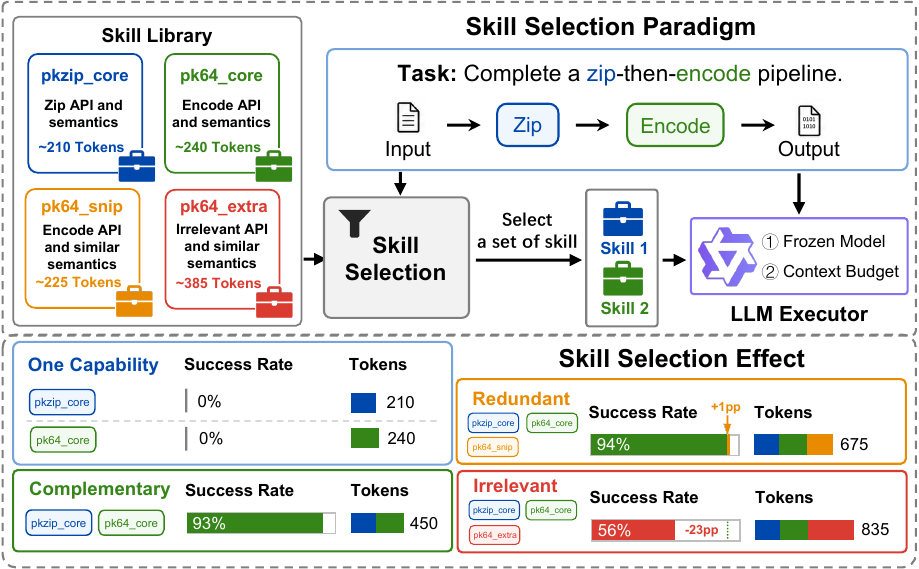}
    \caption{\textbf{Top:} The common skill-selection paradigm for coding agents: given a task, the system selects skills from a library and provides them to a frozen LLM executor. \textbf{Bottom:} Effective skill selection depends on capability composition rather than individual relevance. The LLM executor benefits from skill sets that cover the required capabilities, while redundant and irrelevant skills consume context budget with little or negative utility.}
    \label{fig:moti}
\end{figure}
Yet these capability relationships are critical to effective skill selection.
\cref{fig:moti} illustrates a common skill-selection paradigm for coding agents \cite{openai2026skills,anthropicagentskills}: given a task, the system selects skills from a library and provides them to a frozen LLM executor. Following this paradigm, we evaluate Qwen3-32B \cite{yang2025qwen3} on tasks whose required private APIs and semantics are accessible only through the selected skills, making execution performance directly dependent on selection quality. We find that skills covering only one capability achieve zero success, whereas complementary skills covering both reach a $93\%$ success rate. After both capabilities are covered, adding a redundant \texttt{pk64\_snip} skill consumes $225$ additional tokens but improves the success rate by only $1$ percentage point, while adding a semantically related but task-irrelevant \texttt{pk64\_extra} skill reduces it by $23$ percentage points. Together, these results establish skill selection as a budgeted set-level decision rather than a ranking of individual skills. 

Motivated by these observations, we conduct a principled study on how to formalize and optimize the skill selection stage: given a task query and a bounded token budget, deciding which skill documents to inject into the execution context so that the scarce budget is allocated to the capabilities required by the task (\cref{sec:model}).
Our model takes a capability view of how skills improve performance: each skill document supplies task-relevant capabilities, and the query demands some of them. 
A concave response aggregates supplies of skill set $S$ within each capability dimension, capturing diminishing returns from repeatedly covering the same capability, while separate dimensions reward \revise{covering all demanded capabilities}.
The resulting gross benefit \(G(S)\) is monotone submodular.
To model the performance degradation caused by overlong contexts, the selected skill set is in turn charged a linear context penalty proportional to its token length $\ell(S)$, and the total length must not exceed the available budget $B$. Skill selection therefore becomes maximizing this structured objective subject to the hard token budget with the form:
\begin{equation}
  \max_{S:\ell(S)\le B}
  F(S) := G(S)-\kappa\ell(S).
  \label{eq:intro-selection}
\end{equation}
This problem, formalized in \cref{sec:model}, is an instance of regularized submodular maximization under a knapsack constraint.

Solving this optimization with a provable guarantee, however, faces fundamental barriers.
Without the penalty, the problem contains monotone submodular knapsack maximization, which admits no approximation better than $1-1/e$ unless $\mathrm{P}=\mathrm{NP}$ \cite{feige1998threshold}. 
With the penalty, the objective may be negative, ruling out constant multiplicative approximation and motivating a bicriteria guarantee \cite{nikolakaki2021efficient,harshaw2019distorted}.
Existing results for solving regularized submodular maximization with a hard budget constraint obtain weaker approximation coefficients \cite{gong2024budget,guo2026budgetedprofit,zhang2026streaming} or incur an additive precision-dependent loss \cite{perrault2021rsds}.

Our result attains the best possible coefficient $1-1/e$ on the submodular benefit while preserving the full penalty, achieving a bicriteria $(1-1/e, 1)$-approximation guarantee for the skill selection problem \eqref{eq:intro-selection} via a novel budget-aligned interpolation argument that converts a fractional point on a density chain into one of its recorded integral prefixes, exploiting the aligned form of the constraint and the penalty. 
To our knowledge, this work provides the first structured model of how the selected skill set shapes execution outcomes and the first skill-selection algorithm with a provable performance guarantee.
We summarize our main contributions as follows.
\begin{itemize}
\item \textbf{New Formulation.} We formalize skill selection as a regularized submodular maximization problem under a token budget constraint in \cref{sec:model}.
The structured objective makes redundancy, complementarity, and context cost explicit, and its parameters can be fitted from execution outcomes, with fitting error provably transferring to bounded selection regret. We validate the model on real executions in \cref{sec:evaluation:e1}. 
\item \textbf{Optimal Bicriteria Guarantee.} We develop Best Prefix Selection (\BPS{}, \Cref{alg:bps}), a polynomial-time algorithm for the skill selection problem, and prove the tight $(1-1/e,\,1)$ bicriteria guarantee (\cref{thm:main}), achieving optimality for the benefit approximation coefficient. In the analysis, we propose budget-aligned interpolation, a novel technique exploiting the fact that the budget constraint and the context penalty share the same length coordinate, which yields the tight benefit coefficient $1-1/e$ (\cref{sec:guarantee:proof}).
\item \textbf{Real-world Experiments.} We construct a contamination-controlled benchmark whose tasks are gated to be unsolvable unless the injected skills supply every capability they require. On it, objective \eqref{eq:intro-selection} fitted from pass/fail records alone predicts unseen skill sets accurately and recovers their hidden capability coverage; \BPS{} attains its exact optimum on every selection instance; and the sets it selects \revise{beat every deployed selector we could run by \(0.22\)--\(0.53\) in measured task success, on \(28\%\) fewer tokens than the strongest released router} (\cref{sec:model:validity,sec:evaluation}).
\end{itemize}

\section{Related Work}
\label{sec:related}

\subsection{Skill Selection for LLM Agents}
\label{sec:related:skill}

Skill retrieval and routing \cite{cho2026skillret,zheng2026skillrouter,su2026sra,xiao2026skillsight,wang2026r3} narrow a large skill library to a candidate pool for downstream selection, following a pipeline inherited from tool use \cite{qin2023toolllm,qu2024colt}.
SkillsInjector \cite{li2026skillsinjector} learns how many skills to inject and renders them jointly; SkillSelect-Serve \cite{zheng2026skillselectserve} greedily packs itemwise scores under token and deployment constraints; Graph-of-Skills \cite{liu2026graphofskills} expands dependency-aware bundles under a context cap yet leaves its budgeted objective unsolved; GoSkills \cite{zeng2026goskills} and SkillComposer \cite{zhao2026skillcomposer} assemble bounded skill groups and autoregressive subsets. To our knowledge, none of these heuristics states a provable guarantee for the injected set.
Guarantee-bearing neighbors decide different objects: PACMS \cite{ghulyani2026pacms} applies facility-location coverage to accumulated session content under a token knapsack, and the knapsack composer of \cite{yuan2025knapsack} admits agentic components online with a competitive ratio.
In contrast, we formulate skill selection for a fixed executor as regularized submodular maximization under a hard knapsack token budget, explicitly modeling set-level redundancy and complementarity.

\subsection{Regularized Submodular Maximization}
\label{sec:related:submodular}
Submodular maximization under a knapsack budget is a classical template for placing content and services on constrained resources \cite{golrezaei2012femtocaching,poularakis2019joint}. Skill selection re-instantiates it with the context window as the constraint.
For the pure benefit objective, density greedy with size-three seed enumeration attains the tight $1-1/e$ \cite{sviridenko2004note}, and refined analyses shrink the enumerated seeds to size two \cite{kulik2021refined,feldman2023practical}.
ParetoGreedy \cite{vombatkere2026pareto} extends this template toward benefit-cost trade-offs by recording every prefix of each greedy chain, and proves instance-dependent guarantees on the Pareto frontier, but no guarantee for the regularized objective $G - \kappa\ell$ at a fixed $\kappa$.
Maximizing this regularized objective, however, changes the problem's character: it can be negative, which rules out any constant multiplicative approximation \cite{nikolakaki2021efficient}.
Distorted greedy \cite{harshaw2019distorted} achieves a $(1-1/e,1)$ bicriteria guarantee only under a cardinality constraint.
Under the hard knapsack constraint, guarantees for the regularized objective come from two sparsely connected lines. 
Specializing the knapsack $\Psi$-greedy of \cite{perrault2021rsds} to our aligned objective gives $(1-1/e,1)$ up to an additive $\kappa\epsilon$, paid with $O(B/\epsilon)$ budget levels.
The budgeted-profit works reach $(1/4,1)$ \cite{gong2024budget} and $((1-1/e)/2,1/2)$ in near-linear time \cite{guo2026budgetedprofit}, and $(1/8-\epsilon,1)$ in one streaming pass \cite{zhang2026streaming}.
Our result attains $(1-1/e,1)$ exactly, with none of these compromises, and the benefit coefficient $1-1/e$ is optimal for polynomial-time algorithms (\cref{sec:guarantee}).

%

\section{Model and Optimization Problem}
\label{sec:model}

In this section, we formalize the two-stage skill-interaction model in modern LLM agents.
Let $\gL = \{s_1, \cdots, s_L\}$ denote the skill library, where $s_i$ is the $i$-th skill document.
Given a query $q$, the selection stage chooses a subset of skills, indexed by $S \subseteq [L]$, for a downstream executor $E$, held fixed throughout. With the documents of the selected skills injected into its context window, $E$ executes the query $q$ and produces an observable outcome $Y_E(q,S)$.
We define the execution effect of skill set $S$ as
\begin{equation}
  F_E^\star(q,S) := \E\bigl[Y_E(q,S) - Y_E(q,\varnothing)\bigr]. \label{eq:pop-effect}
\end{equation}
$F^\star_E(q,S)$ measures the performance improvement from injecting skill set $S$ into $E$'s context window, relative to running $E$ without any skills.

A natural formulation of the selection stage is to choose the skill index set $S$ that maximizes the execution effect $F_E^\star(q,S)$.
However, the execution effect $F_E^\star(q,S)$ is a black-box function of the frozen executor $E$, the query $q$, and the skill set $S$. Therefore, we need a structured model to approximate the execution effect $F_E^\star(q,S)$.

In the following sections, we model the execution effect $F_E^\star(q,S)$ in three steps.
\cref{sec:model:objective} distills empirical observations into a structured objective, a monotone submodular capability benefit minus a linear degradation penalty;
\cref{sec:model:problem} casts the selection stage as maximizing this objective under a hard token budget and locates its computational hardness; and \cref{sec:model:validity} shows that all latent parameters are learnable from execution records, with fitting error provably transferring to selection regret (\cref{prop:transfer}).
We validate the fitted objective on real executions in
\cref{sec:evaluation:e1}.

\subsection{Key Observations and Structured Objective}
\label{sec:model:objective}

The structured model is motivated by three recurring observations. 
\textbf{(i)} 
Context value is query-dependent and set-level: for the fixed executor $E$, the positive performance contribution of injecting a skill is not an intrinsic per-skill score. It depends in part on whether the capabilities conveyed by the skill document match those required by query $q$ \cite{didolkar2024metacognitive,an2023skillbased,xu2024lars} and complement those already supplied by the selected set $S$ \cite{gupta2023coverage}.
\textbf{(ii)} More injected context is not uniformly beneficial: irrelevant content can distract execution \cite{shi2023large}, redundant items may add little marginal coverage \cite{gupta2023coverage,kumari2024end}, and longer inputs can reduce performance in some settings even when the relevant evidence is retrieved correctly \cite{liu2024lost}.
\textbf{(iii)} For a fixed executor and agent-framework configuration, the selected skill documents must fit within the residual context budget \cite{ye2023compositional,kumari2024end,qu2024colt}.

Guided by these observations, we build the structured model around a latent \emph{capability space}. Observations (i) and (ii) shape the benefit side, while observations (ii) and (iii) shape the cost side. The following two paragraphs formalize the two sides in turn.

\paragraph{Structured Capability Benefit}
Following observations (i) and (ii), we model the positive contribution of skills through a latent capability space with $d$ dimensions. For each skill $s_i \in \gL$, we assume there exists a latent capability supply vector $\vu_i = (u_{i,1}, \cdots, u_{i,d}) \in \R_+^d$, where $u_{i,k}$ quantifies how much capability skill $s_i$ supplies in the $k$-th dimension, e.g., operating git or analyzing logs.

To model the complementarity and redundancy among skills, we introduce a nondecreasing concave function $h_k$ with $h_k(0)=0$ that captures diminishing returns within the same capability dimension: for a skill set $S$, we use $h_k\left(\sum_{i\in S}u_{i,k}\right)$ to represent the capability coverage of $S$ in the $k$-th dimension, so that supplies in the same dimension overlap redundantly while supplies in different dimensions remain complementary.

For query $q$, we assume there exists a latent capability demand vector $\vw^q = (w_1^q, \cdots, w_d^q) \in \R_+^d$ that encodes the query's demand for each capability dimension. We then model the gross benefit of injecting skill set $S$ into executor $E$ as a monotone submodular function
\begin{equation}\label{eq:def-G}
  G_E(q, S) := \sum_{k=1}^{d} \eta^E_k w_k^q \cdot  h_k\left(\lambda^E_k \sum_{i\in S} u_{i,k}\right),
\end{equation}
where $\eta^E_k \ge 0$ and $\lambda^E_k \ge 0$ are executor-specific parameters that calibrate $E$'s sensitivity to the $k$-th capability dimension.
$G_E$'s form aligns with the standard model of weighted coverage with diminishing returns in document summarization and data selection \cite{lin2011class,kirchhoff2014submodularity}, with capability dimensions playing the role of features.
\revise{Intuitively, $G_E$ lives on the scale of a log success probability, since success rates across demanded dimensions multiply, so its steepest gains fall on demanded dimensions that remain uncovered.}

\paragraph{Degradation Penalty and Hard Context Budget}
Following observation (ii), injected documents impose a cost that grows with context length. Assume each skill document $s_i$ has a token length $\ell_i$, the number of context-window tokens it occupies once injected, and write $\ell(S) := \sum_{i\in S}\ell_i$ for the total token length of skill set $S$. We model the performance degradation caused by injecting $S$ as a linear penalty
\begin{equation}\label{eq:def-c}
  c_E(S) := \kappa_E \cdot \ell(S),
\end{equation}
where $\kappa_E \ge 0$ is the frozen executor's first-order per-token context sensitivity. This per-token charge is consistent with measurements in the skill setting: compressing skill-document bodies improves execution quality \cite{gao2026skillreducer}, and focused skills outperform larger, exhaustive ones \cite{li2026skillsbench}.

Moreover, observation (iii) imposes a hard feasibility limit on skill selection. We take the budget $B$ to be the residual token budget available to skill documents in the agent framework. The hard budget defines the feasible family
\begin{equation}\label{eq:def-feasible}
  \gF_B = \left\{S\subseteq[L]:\ \ell(S)\le B\right\}.
\end{equation}

\subsection{Optimization Problem in Skill Selection}
\label{sec:model:problem}

Given the structured model above, we formalize the selection stage as a constrained optimization problem. The structured selection objective combines the capability benefit with the degradation penalty,
\begin{equation}\label{eq:def-F}
  F_E(q, S) := G_E(q, S) - c_E(S),
\end{equation}
and the selection problem maximizes it over the feasible family:
\begin{equation}\label{eq:selection}
  \max_{S \in \gF_B} \; F_E(q, S).
\end{equation}

Problem \eqref{eq:selection} is computationally hard: setting $\kappa_E = 0$ and specializing the responses $h_k$ to truncated sums recovers budgeted maximum coverage \cite{khuller1999budgeted}, an NP-hard slice of monotone submodular knapsack maximization \cite{sviridenko2004note}. Moreover, the penalty makes $F_E$ non-monotone and possibly negative, ruling out any constant multiplicative approximation in polynomial time \cite{nikolakaki2021efficient}. \cref{sec:guarantee} therefore develops an algorithm with a bicriteria guarantee that treats the benefit and the penalty asymmetrically.

\subsection{Model Validity and Learnable Parameterization}
\label{sec:model:validity}

The structured objective \eqref{eq:def-F} is built on latent quantities. Among its primitives, only the token length $\ell_i$ for each skill document $s_i$ is observable. 
The capability supplies $\vu_i$, the query demand $\vw^q$, and the executor calibration parameters $\eta^E_k$, $\lambda^E_k$, $\kappa_E$ admit no direct measurement. 
Two questions therefore decide whether the model is usable in practice: whether these latent quantities can be estimated from data, and whether an accurately estimated objective leads to a well-chosen skill set. 

For the first question, the key observation is that the selection objective never requires the latent factors individually. In \eqref{eq:def-G}, the demand $w^q_k$ enters only through the product $\eta^E_k w^q_k$, and the supply $u_{i,k}$ only through $\lambda^E_k u_{i,k}$. 
Defining the \emph{effective demand} and \emph{effective supply}
\begin{equation}\label{eq:def-effective}
  \widetilde w^q_k := \eta^E_k w^q_k, \quad \widetilde u_{i,k} := \lambda^E_k u_{i,k}, \quad \forall k \in [d],
\end{equation}
the gross benefit \eqref{eq:def-G} rewrites exactly as $G_E(q,S) = \sum_{k=1}^{d} \widetilde w^q_k\, h_k\bigl(\sum_{i\in S} \widetilde u_{i,k}\bigr)$.
We therefore estimate the effective quantities directly with two \emph{capability encoders}: a demand encoder $\wh\psi_E: q \mapsto \widehat\vw^q$ and a supply encoder $\wh\phi_E: s_i \mapsto \widehat\vu_i$, where $\widehat\vw^q$ and $\widehat\vu_i$ are estimates of the effective demand $\widetilde\vw^q$ and effective supply $\widetilde\vu_i$, respectively. Each encoder is instantiated as a text encoder with a nonnegative output layer, so that the fitted objective inherits the monotone submodularity established in \cref{sec:model:objective}.
The context sensitivity is calibrated as $\wh\kappa_E$ jointly with the encoders on execution records $(q, S, Y_E(q,S))$ of the frozen executor. Inspired by \cite{iyer2015submodular}, we fix the saturating form $h_k(x) = 1 - e^{-x}$, which is nondecreasing, concave, and bounded, thereby assigning diminishing marginal value once a capability dimension is sufficiently covered.

For the second question, we give a conditional answer: whenever the fitted objective is uniformly accurate, any near-optimal selection under the fitted objective is provably near-optimal for the true execution effect. Let $\widehat F_E(q,S)$ denote the fitted objective, obtained by instantiating \eqref{eq:def-F} with the encoder outputs and the calibrated $\widehat\kappa_E$:
\begin{equation}\label{eq:def-hat-F}
  \widehat F_E(q, S) := \left\langle \wh\psi_E(q), h\left(\sum_{i\in S} \wh{\phi}_E(s_i)\right) \right\rangle - \wh{\kappa}_E \cdot \ell(S),
\end{equation}
where $h(\cdot)$ applies $\{h_k\}$ componentwise. At decision time, the selection layer solves the \emph{fitted selection problem}, which is the deployable counterpart of \eqref{eq:selection} and is defined by
\begin{equation}\label{eq:selection-fitted}
  \max_{S \in \gF_B} \widehat F_E(q, S).
\end{equation}
Solving \eqref{eq:selection-fitted} is worthwhile, however, only insofar as the fitted objective tracks the true execution effect. We formalize this accuracy requirement as a uniform error bound over the feasible family.
\begin{assumption}[Uniform fitting error]\label{ass:fiterror}
  For the fixed instance $(q, E, B)$, there exists $\varepsilon \ge 0$ such that
  $\bigl|F_E^\star(q,S) - \widehat F_E(q,S)\bigr| \le \varepsilon$
  for all $S \in \gF_B$.
\end{assumption}
The fitting error $\varepsilon$ aggregates two sources: structural mismatch of the objective model in \Cref{sec:model:objective}, and estimation error of the encoders and the calibrated $\wh\kappa_E$.

\begin{figure}[t]
\centering
{
\includegraphics[width=.8\linewidth]{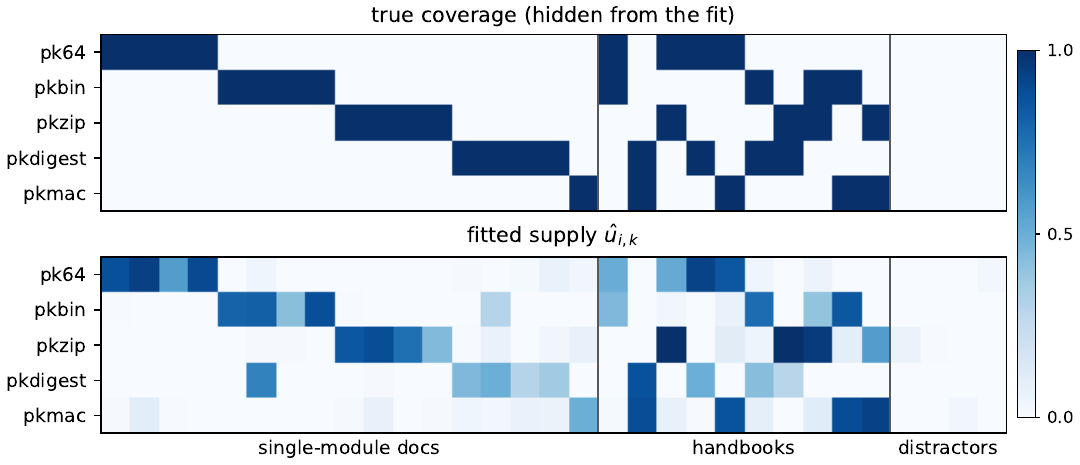}
\caption{\revise{Parameter recovery. Top: the true skill-capability
coverage matrix, hidden from the fit. Bottom: the fitted supply
$\wh u_{i,k}$, learned from pass/fail outcomes alone; its latent
dimensions carry no names, so they are matched to the capabilities
by the best permutation. Columns are the $31$ skills, rows the
$5$ capabilities. Both panels use the scale on the right, white
$=0$ to dark blue $=1$; the bottom panel plots $\wh u_{i,k}$
divided by its largest entry.}}
\label{fig:recovery}}
\end{figure}

In \cref{sec:evaluation:e1} we fit \eqref{eq:def-hat-F}
on execution records of a frozen Qwen3-32B and evaluate it on skill
combinations held out from the fit. It predicts their success to
within one percentage point, and orders them by success more
accurately than every value model we compare against, including
neural set regressors with \revise{$60\times$} as many parameters. Its
fitted supplies also recover the true skill-capability coverage
matrix, hidden from the fit: over the $155$ (skill, capability)
pairs, $\wh u_{i,k}$ ranks a covered pair above an uncovered one
\revise{$99.6\%$} of the time (AUC \revise{$0.996$}, \cref{fig:recovery}).
\Cref{ass:fiterror} is therefore attainable on a real executor from
pass/fail outcomes alone, with \revise{$281$} parameters and nothing assumed
beyond the structured form \eqref{eq:def-G}.

The next proposition bounds the end-to-end \emph{selection regret} for any rule that approximately solves \eqref{eq:selection-fitted}.
\begin{proposition}[Error transfer]\label{prop:transfer}
  Under \cref{ass:fiterror}, every $\widehat S \in \gF_B$ with
  $\widehat F_E(q,\widehat S) \ge \max_{S\in\gF_B}\widehat F_E(q,S) -
  \delta$ satisfies
  \begin{equation}\label{eq:transfer}
    \max_{T\in\gF_B} F_E^\star(q,T) - F_E^\star(q,\widehat S)
    \le 2\varepsilon + \delta.
  \end{equation}
\end{proposition}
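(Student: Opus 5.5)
The argument is the standard three-step sandwich: pass from the true effect to the fitted objective, use near-optimality of $\widehat S$, then pass back. \Cref{ass:fiterror} is used twice, once at a true maximizer and once at $\widehat S$. Both sets lie in $\gF_B$, which is exactly where the uniform bound holds.

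First, $\gF_B$ is finite (it is a subfamily of $2^{[L]}$) and nonempty (it contains $\varnothing$). Hence the maximum $\max_{T\in\gF_B}F_E^\star(q,T)$ is attained at some $T^\star\in\gF_B$, and likewise $\max_{S\in\gF_B}\widehat F_E(q,S)$ is attained. This disposes of any attainment issue.

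The chain is then:
\begin{align*}
F_E^\star(q,T^\star) &\le \widehat F_E(q,T^\star)+\varepsilon
\le \max_{S\in\gF_B}\widehat F_E(q,S)+\varepsilon\\
&\le \widehat F_E(q,\widehat S)+\delta+\varepsilon
\le F_E^\star(q,\widehat S)+2\varepsilon+\delta .
\end{align*}
The four inequalities come from, in order:
\begin{itemize}
\item \Cref{ass:fiterror} applied at $T^\star$;
\item $T^\star\in\gF_B$, so it is a competitor in the fitted maximization;
\item the $\delta$-optimality hypothesis on $\widehat S$;
\item \Cref{ass:fiterror} applied at $\widehat S\in\gF_B$.
\end{itemize}
Rearranging gives \eqref{eq:transfer}.

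There is no real obstacle. The only point needing care is that both applications of the assumption are to sets in $\gF_B$, since the bound is not assumed outside the feasible family; this is why the hypothesis requires $\widehat S\in\gF_B$. The argument uses no structure of $G_E$ or of the penalty, so it holds for any fitted surrogate and any selection rule.
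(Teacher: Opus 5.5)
Your proof is correct and is essentially the paper's argument: the paper takes a maximizer $T^\star$ of $F_E^\star(q,\cdot)$ over $\gF_B$ and writes the same sandwich in compressed form, $F_E^\star(q,T^\star)-F_E^\star(q,\widehat S)\le \widehat F_E(q,T^\star)-\widehat F_E(q,\widehat S)+2\varepsilon\le\delta+2\varepsilon$. Your explicit four-step chain and your remark that $\gF_B$ is finite and nonempty, so both maxima are attained, are harmless additions that make the same steps fully explicit.
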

\begin{IEEEproof}
Let $T^\star$ maximize $F_E^\star(q,\cdot)$ over $\gF_B$. Then we have $F_E^\star(q,T^\star) - F_E^\star(q,\widehat S) \le \widehat F_E(q,T^\star) - \widehat F_E(q,\widehat S) + 2\varepsilon \le \delta + 2\varepsilon$.
\end{IEEEproof}
\cref{prop:transfer} decomposes the selection regret into its two sources: the fitting error $\varepsilon$ of the learned model, whose origins we discussed above, and the optimization error $\delta$ of solving the fitted selection problem \eqref{eq:selection-fitted}, which will be further bounded in \cref{thm:main}.

\section{Skill Selection with Provable Guarantees}
\label{sec:guarantee}

We now present the selection algorithm and its per-instance guarantee for the fitted selection problem \eqref{eq:selection-fitted}. Throughout this section we fix the query $q$, the frozen executor $E$, and the budget $B$, and suppress them from the notation, writing $\wh G(S)$ for the fitted capability benefit, $\wh \kappa$ for the calibrated context sensitivity, and $\wh F (S) = \wh G(S) - \wh{\kappa}\ell(S)$ for the fitted objective defined in \eqref{eq:def-hat-F}.
Our analysis uses only that $\wh G$ is normalized ($\wh G(\varnothing)=0$), nondecreasing, and submodular, that all lengths $\ell_i$ are positive, and that $\wh\kappa \ge 0$. 

The objective $\wh F = \wh G - \wh\kappa\ell$ raises three difficulties at once. First, although $\wh G$ is monotone, $\wh F$ can be non-monotone and negative, so a selection rule must be allowed to stop early or output the empty set. In particular, the classical greedy analysis for monotone submodular maximization no longer applies to $\wh F$. Second, skills have heterogeneous lengths under a single knapsack constraint, so locally dense choices can block valuable combinations. Third, the executor accepts only integral skill sets, so fractional reasoning must eventually land on an integral candidate.

In this section, we show that these difficulties call for a new analysis of approximate greedy algorithms. We present our algorithm, \emph{Best Prefix Selection} (\BPS), and prove, to our knowledge, the first per-instance bicriteria approximation guarantee for the fitted selection problem \eqref{eq:selection-fitted} (\cref{thm:main}).

\subsection{The \BPS{} Algorithm}
\label{sec:guarantee:algorithm}

We state \emph{Best Prefix Selection} (\BPS{}) in \cref{alg:bps}, a partial-enumeration density-greedy procedure with seed size two.
In Lines 2--4, the algorithm enumerates all feasible seeds of size at most two. Then it grows a density-greedy chain from each seed by iteratively adding the skill with the highest marginal benefit per token (Lines 5--9), and records every feasible prefix encountered along each chain. Finally, it returns the single \emph{best} recorded prefix, the one maximizing the fitted objective $\wh F$ (Line 11).

\begin{algorithm}[t]
  \caption{Best Prefix Selection (\BPS{})}
  \label{alg:bps}
  \begin{algorithmic}[1]
  \REQUIRE skill library $\gL$, budget $B$, fitted benefit oracle $\wh G$, fitted context sensitivity $\wh\kappa$.
  \ENSURE selected skill set $S_{\BPS}$.
  \STATE Discard every skill $i$ with $\ell_i > B$; initialize the \emph{prefix pool} $\cP \leftarrow \emptyset$.
  \FOR{each seed $A \subseteq [L]$ with $|A| \le 2$ and $\ell(A) \le B$}
    \STATE $S \leftarrow A$
    \STATE Add prefix $S$ to $\cP$. \COMMENT{each seed opens a prefix chain}
    \WHILE{some $i \notin S$ fits, i.e., $\ell(S) + \ell_i \le B$}
      \STATE $i^\star \leftarrow \argmax_{i \notin S,\, \ell(S)+\ell_i\! \le\! B} \left(\wh G(\{i\}\!\cup\! S)\! -\! \wh G(S)\right)\!/\ell_i$
      \STATE $S \leftarrow S \cup \{i^\star\}$.
      \STATE Add prefix $S$ to $\cP$. \COMMENT{record every prefix in chain}
    \ENDWHILE
  \ENDFOR
  \STATE $S_{\BPS} \leftarrow \argmax_{S \in \cP}\, \wh F(S)$. \COMMENT{choose the best prefix}
  \RETURN $S_{\BPS}$
  \end{algorithmic}
\end{algorithm}

\cref{alg:bps} implements the standard partial-enumeration density greedy for monotone submodular knapsack \cite{khuller1999budgeted,sviridenko2004note,kulik2021refined}. 
Under a monotone benefit, the endpoint of each chain dominates all its prefixes, so the known $1-1/e$ analyses compare only chain endpoints. 
Under the non-monotone fitted objective $\wh F$, however, the endpoint need not be the best candidate, since the optimum can sit strictly inside a chain. 
Inspired by \cite{vombatkere2026pareto}, we record every prefix and select the best one over the full collection by $\wh F$; this best-prefix selection step gives \BPS{} its name.

\subsection{Main Result: Bicriteria Approximation Guarantee}

Throughout, let $\alpha = 1 - 1/e$. The theoretical guarantee for \cref{alg:bps} is given below.

\begin{theorem}[Bicriteria $(1-1/e,\,1)$-approximation guarantee]
  \label{thm:main}
  Let $\wh G$ be normalized ($\wh G(\varnothing) = 0$), nondecreasing, and submodular, let $\ell_i > 0$ for all $i \in [L]$, and let $\wh\kappa \ge 0$. The \BPS{} output $S_{\BPS}$ of \cref{alg:bps} satisfies $S_{\BPS} \in \gF_B$ and
  \begin{equation}
    \wh F(S_{\BPS}) \ge \alpha\,\wh G(T) - \wh\kappa\ell(T) \quad\forall T \in \gF_B.
    \label{eq:guarantee}
  \end{equation}
\end{theorem}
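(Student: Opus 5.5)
The plan is to fix an arbitrary $T\in\gF_B$ and show that some recorded prefix $P\in\cP$ satisfies $\wh F(P)\ge \alpha\wh G(T)-\wh\kappa\ell(T)$. Since $S_{\BPS}$ maximizes $\wh F$ over $\cP$, this proves the theorem. Feasibility of $S_{\BPS}$ is immediate: every seed satisfies $\ell(A)\le B$, and Line 5 only adds skills that fit.

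\textbf{Small $T$ and the seed.} If $|T|\le 2$, then $T$ is itself a seed. It is recorded in $\cP$, so $\wh F(S_{\BPS})\ge \wh G(T)-\wh\kappa\ell(T)\ge\alpha\wh G(T)-\wh\kappa\ell(T)$.

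Otherwise, list $T$ greedily by marginal value: $t_1$ maximizes $\wh G(\{t\})$ over $t\in T$, $t_2$ maximizes $\wh G(\{t_1,t\})-\wh G(\{t_1\})$, and so on. Take the seed $A=\{t_1,t_2\}$, the remainder $T'=T\setminus A$, and the residual function $\wh G_A(X)=\wh G(A\cup X)-\wh G(A)$, which is again normalized, monotone and submodular. By construction every $t\in T'$ has $\wh G_A(\{t\})$ bounded by the second-step marginal, and hence by a constant fraction of $\wh G(A)$. This is the property the seed enumeration is meant to buy.

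\textbf{Fractional greedy curve and budget-aligned interpolation.} Let $A=S_0\subset S_1\subset\cdots$ be the chain grown from $A$. Parametrize it by added length $x=\ell(S)-\ell(A)$ and interpolate linearly inside each greedy step, which gives a concave piecewise-linear curve $g(x)$.

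As long as every element of $T'\setminus S_j$ still fits, submodularity and the density-greedy rule give
\[
\wh G_A(T')-g(x)\le \ell(T')\,g'(x).
\]
Integrating this differential inequality yields $g(x)\ge(1-e^{-x/\ell(T')})\wh G_A(T')$, so $g(\ell(T'))\ge\alpha\wh G_A(T')$.

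The key point is that the length coordinate is exactly the one charged by the penalty. Suppose the point $x=\ell(T')$ lies on the segment between $S_j$ and $S_{j+1}$, at parameter $\theta\in[0,1]$. Then both the interpolated benefit and the length $\ell(A)+x$ are the same convex combination of the endpoint values. Hence
\[
(1-\theta)\wh F(S_j)+\theta\wh F(S_{j+1})=\wh G(A)+g(\ell(T'))-\wh\kappa\ell(T)\ge \alpha\wh G(T)-\wh\kappa\ell(T),
\]
using $\wh G(A)\ge\alpha\wh G(A)$. Since both $S_j$ and $S_{j+1}$ are recorded, one of them attains the bound. Non-monotonicity of $\wh F$ costs nothing here, because we compare against the whole pool of prefixes rather than the chain endpoint.

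\textbf{Main obstacle: greedy skips a too-long element of $T'$.} The hard case is when some element of $T'$ stops fitting before the curve reaches $x=\ell(T')$. This can happen because the chain's total length is at most $B$, but greedy may have filled space with non-$T$ elements. My plan is the classical Sviridenko/Khuller-type treatment. At the first step where the density-maximizing element $e\in T'$ does not fit, extend the curve fractionally by $e$. The differential inequality still holds up to this point, and the curve passes length $\ell(T')$ because $\ell(S_j)+\ell_e>B\ge\ell(T)$.

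Then drop $e$. The recorded prefix $S_j$ has benefit at least the interpolated value minus $\wh G_A(\{e\})$. Its length is smaller, so its penalty only decreases; this is again where the shared length coordinate helps. The loss $\wh G_A(\{e\})$ is then absorbed using the seed property from the first step.

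With seeds of size three this absorption is the standard $\tfrac13$-per-element argument. With seeds of size two, I would follow the refined analysis of \cite{kulik2021refined,feldman2023practical}, which splits on whether the skipped element's marginal is small relative to $\wh G(A)$. I expect porting that refinement, while keeping the penalty term exact rather than bounded multiplicatively, to be the delicate step.
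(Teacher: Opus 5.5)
\textbf{There is a genuine gap in the hard case.} Your outline matches the paper's: a two-element greedy seed from $T$, the residual function, the density-chain trajectory in the added-length coordinate, and interpolation between two consecutive recorded prefixes because budget and penalty share that coordinate. Your easy case is fine. The hard case is exactly where seeds of size two (the algorithm as stated) differ from seeds of size three, and your plan defers it. Write $v$ for the skipped element of $T'$ (your $e$). Your drop-and-absorb step gives
\[
\wh G(S_j) \ge \wh G(A) + \alpha\bigl(\wh G(T) - \wh G(A)\bigr) - \wh G_A(\{v\}) = \alpha\wh G(T) + \tfrac{1}{e}\wh G(A) - \wh G_A(\{v\}),
\]
so you need $\wh G_A(\{v\}) \le \wh G(A)/e$. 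A two-element seed only gives $\wh G_A(\{v\}) \le m_2 \le \wh G(A)/2$. Since $1/2 > 1/e$, the bound can fall short by $(1/2 - 1/e)\wh G(A)$. Pointing to ``the refined analysis'' at this point leaves the main content of the theorem unproved. The refinement the paper actually adapts also does not proceed by bounding a skipped element's marginal.

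\textbf{The missing idea is to stop before any element of $T'$ can become infeasible.} Let $v^\star$ be the longest element of $T'$, and evaluate the trajectory at $r = \ell(T') - \ell_{v^\star}$ instead of $\ell(T')$.
\begin{itemize}
\item While the chain's added length $u$ is below $r$, every $v \in T'$ still fits, since $u + \ell_v < r + \ell_{v^\star} = \ell(T') \le B - \ell(A)$.
\item The chain does reach length $r$. Either it absorbs all of $T'$, or it stops because some unchosen $v \in T'$ no longer fits, which forces its added length past $B - \ell(A) - \ell_v \ge r$.
\item So the density inequality holds on all of $[0,r]$ with no exceptional case.
\end{itemize}

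The price moves to the benefit side. The single-exponential bound $(1-e^{-r/\ell(T')})\wh G_A(T')$ is too weak when $v^\star$ carries most of the length. The paper repairs this in three steps:
\begin{itemize}
\item It builds a two-block bounding function $\varphi$: blocks $\{v^\star\}$ and $T'\setminus\{v^\star\}$, ordered by density, give a piecewise exponential.
\item It shows $V \ge \varphi$ on $[0,r]$ by the same integrating-factor argument you sketch.
\item It proves the static inequality $\wh G(A) + \varphi(r) \ge \alpha\wh G(T)$ from $\wh G(A) \ge 2\wh G_A(\{v\})$, using a log-sum (Jensen) estimate and a three-case analysis. This needs $|T| \ge 4$; for $|T| = 3$ the seed alone gives $\tfrac23\wh G(T)$.
\end{itemize}
Interpolating at $r$ then costs penalty $\wh\kappa(\ell(A)+r) \le \wh\kappa\ell(T)$, and your convex-combination step finishes the proof. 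As it stands, your proposal establishes the guarantee only for the size-three-seed variant of the algorithm.
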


The guarantee \eqref{eq:guarantee} is a \emph{bicriteria approximation}: the two coefficients are $\alpha = 1-1/e$ on the benefit and $1$ on the penalty, meaning that \BPS{} recovers at least a $(1-1/e)$ fraction of any feasible set's capability benefit while incurring its full context-length penalty. 

\paragraph{Tightness}
The benefit coefficient $\alpha = 1-1/e$ in \eqref{eq:guarantee} cannot be improved under standard complexity assumptions (any $(1-1/e+\epsilon)$-approximation is NP-hard \cite{feige1998threshold}). \Cref{thm:main} provides a tight polynomial-time bicriteria guarantee for maximizing a monotone submodular benefit minus a linear penalty under a knapsack constraint.

\paragraph{Comparison to prior work}

The guarantees closest to our setting address the same regularized objective under the same knapsack constraint, and each concedes what \cref{thm:main} does not: weaker approximation coefficients \cite{gong2024budget,guo2026budgetedprofit,zhang2026streaming}, an additive precision-dependent loss \cite{perrault2021rsds}, or a fractional output \cite{feldman2021guessfree}.
\texttt{ParetoGreedy}~\cite{vombatkere2026pareto}, whose candidate generation \BPS{} shares, proves instance-dependent guarantees on the Pareto frontier, but none for the regularized objective at a fixed $\wh\kappa$.
\Cref{thm:main} is, to our knowledge, the first to combine all four properties: a regularized objective, a knapsack constraint with heterogeneous item sizes, an integral output, and zero additive loss.

\paragraph{Selection regret}
Let $S^\star = \argmax_{T \in \gF_B} F^\star_E(q, T)$ denote the skill set maximizing the true execution effect. Taking $T = S^\star$ in \eqref{eq:guarantee} gives $\wh F(S_{\BPS}) \ge \wh F(S^\star) - \frac{1}{e}\wh G(S^\star)$, so the suboptimality of the \BPS{} output is at most a $1/e$ fraction of the optimum's benefit $\wh G(S^\star)$. Combining \cref{thm:main} with the error transfer of \cref{prop:transfer} yields an end-to-end bound on the true execution effect.
\begin{corollary}[Selection regret of \BPS{}]\label{cor:regret}
Under \cref{ass:fiterror}, the \BPS{} output $S_{\BPS}$ satisfies $F_E^\star(q,S^\star) - F_E^\star(q,S_{\BPS}) \le \frac{1}{e}\wh G(S^\star) + 2\varepsilon.$
\end{corollary}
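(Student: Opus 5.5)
The corollary follows by chaining three inequalities: \cref{ass:fiterror} moves from $F_E^\star$ to $\wh F$ at the output $S_{\BPS}$, \cref{thm:main} (taken as given) compares the two fitted values, and \cref{ass:fiterror} moves back to $F_E^\star$ at $S^\star$. There is no need to re-derive the approximation guarantee. The only work is to make sure \cref{thm:main} is applied to the correct comparison set and that both sets are feasible, so that the uniform error bound covers them.

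\emph{Step 1 (feasibility).} By definition, $S^\star \in \gF_B$. \cref{thm:main} guarantees $S_{\BPS}\in\gF_B$. Hence \cref{ass:fiterror} applies to both sets:
\[
F_E^\star(q,S_{\BPS}) \ge \wh F(S_{\BPS}) - \varepsilon,
\qquad
\wh F(S^\star) \ge F_E^\star(q,S^\star) - \varepsilon .
\]

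\emph{Step 2 (invoke the guarantee at $T=S^\star$).} Apply \eqref{eq:guarantee} with $T=S^\star$ and use $\alpha = 1-1/e$:
\[
\wh F(S_{\BPS}) \ge \alpha\wh G(S^\star) - \wh\kappa\ell(S^\star) = \wh F(S^\star) - \tfrac1e \wh G(S^\star).
\]
This is exactly the rewriting noted in the ``Selection regret'' paragraph.

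\emph{Step 3 (chain).} Combining the two steps gives
\[
F_E^\star(q,S_{\BPS}) \ge \wh F(S_{\BPS}) - \varepsilon \ge \wh F(S^\star) - \tfrac1e\wh G(S^\star) - \varepsilon \ge F_E^\star(q,S^\star) - \tfrac1e\wh G(S^\star) - 2\varepsilon .
\]
Rearranging yields the stated bound.

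An equivalent route would use \cref{prop:transfer} with $\delta = \tfrac1e\wh G(\wh S^\star)$, where $\wh S^\star$ maximizes $\wh F$. However, that gives $\wh G$ evaluated at the fitted optimizer rather than at $S^\star$. For this reason I use the direct chain, which produces exactly $\wh G(S^\star)$.

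There is no real obstacle here. The only point needing care is Step 2: \cref{thm:main} must be applied to $S^\star$ itself, not to the fitted maximizer, so that the $1/e$ term is measured on $S^\star$ as the statement requires. This application is legitimate because \eqref{eq:guarantee} holds for every $T\in\gF_B$.
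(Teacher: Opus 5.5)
Your proof is correct and follows the paper's own route. The paper takes $T = S^\star$ in the guarantee of \cref{thm:main} to get $\wh F(S_{\BPS}) \ge \wh F(S^\star) - \frac{1}{e}\wh G(S^\star)$, then pays $\varepsilon$ twice through \cref{ass:fiterror}; your remark is also right that the paper's ``combining with \cref{prop:transfer}'' only works by reusing that proposition's proof chain with $T^\star = S^\star$, because applying its statement as a black box would put the $\frac{1}{e}\wh G$ term at the fitted maximizer instead of at $S^\star$.
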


\paragraph{Time Complexity}
\Cref{alg:bps} runs in $O(dL^4)$ time, where $d$ is the capability dimension and $L$ the library size: $O(L^2)$ seeds each grow a chain of at most $L$ density steps, and each step scans $O(L)$ candidates at $O(d)$ cost per marginal evaluation. In practice $L$ is the size of the
shortlist left by a high-recall retrieval stage \cite{cho2026skillret,zheng2026skillrouter,gan2025ragmcp}, not of the whole registry.

\subsection{\texorpdfstring{Proof of \cref{thm:main}}{Proof of the Main Theorem}}
\label{sec:guarantee:proof}

For an arbitrary feasible set $T$, the proof proceeds in three steps: \textbf{(i)}~Construct a seed $J$ from $T$'s two highest-marginal items and define a residual benefit function $f$. \textbf{(ii)}~Lower-bound the trajectory function \eqref{eq:trajectory} of the density chain grown from $J$ by a piecewise-exponential bounding function $\varphi$, adapting the refined analysis of \cite{kulik2021refined}, and establish $\wh G(J) + V(r) \ge \alpha \wh G(T)$. \textbf{(iii)}~Convert this fractional bound into an actual recorded integral prefix via budget-aligned interpolation, and conclude via the $\wh F$-maximization of \cref{alg:bps}.

\subsubsection{Seed construction and residual function}
We first write $\wh G(i \mid S) = \wh G(S \cup \{i\}) - \wh G(S)$ for the marginal benefit. 
We order the items of $T$ greedily by nonincreasing marginal benefit with an arbitrary fixed tie-breaking rule:
\begin{equation}\label{eq:comp-order}
\begin{aligned}
  t_j &\in \argmax_{i \in T \setminus \{t_1,\cdots,t_{j-1}\}} \wh G\left(i \mid \{t_1,\cdots,t_{j-1}\}\right), \\
  m_j &:= \wh G\left(t_j \mid \{t_1,\cdots,t_{j-1}\}\right), \quad \forall j \in [|T|].
\end{aligned}
\end{equation}

If $|T| \le 2$, then \cref{alg:bps} enumerates $T$ itself as a seed, and we have $\wh F(T) = \wh G(T) - \wh\kappa\ell(T) \ge \alpha\wh G(T) - \wh\kappa\ell(T).$ 

We set the feasible seed set $J$ with $\ell(J) \le B$ as 
\begin{equation}
\label{eq:seed-def}
  J := \{t_1,t_2\}.
\end{equation}
Then \cref{alg:bps} enumerates it in Line 2 and records it in Line 4, so $J \in \cP$. 
If $|T| = 3$: $\wh G(J) = m_1 + m_2 \ge \frac{2}{3}(m_1 + m_2 + m_3) = \frac{2}{3}\wh G(T) \ge \alpha\wh G(T)$ and $\ell(J) \le \ell(T)$ give $\wh F(J) \ge \alpha\,\wh G(T) - \wh\kappa\,\ell(T)$.

In the following, we assume $|T| \ge 4$. Our goal is to prove that on the chain grown from $J$, some recorded prefix $S_T$ satisfies $\wh F(S_T) \ge \alpha\wh G(T) - \wh\kappa\ell(T)$.

Inspired by the analysis in \cite{sviridenko2004note}, we define the residual function
\begin{equation}\label{eq:def-residual}
  f(U) := \wh G(J \cup U) - \wh G(J), \quad \forall U \subseteq [L].
\end{equation}
By the properties of $\wh G$, $f$ is normalized ($f(\varnothing)=0$), nonnegative, nondecreasing, and submodular. 
Let $P = T \setminus J$ denote the comparator residual. For every $v \in P$, we have $f(\{v\}) = \wh G(\{v\} \cup J) - \wh G(J) = \wh G(v \mid J) \le \wh G(v \mid \{t_1\}) \le m_2$, where the first inequality uses submodularity and the second uses the greedy ordering \eqref{eq:comp-order}. Therefore, we have
\begin{equation}\label{eq:seed-two-marginals}
  \begin{aligned}
    \wh{G}(J) &= \wh{G}(\{t_1, t_2\}) - \wh{G}(\{t_1\}) + \wh{G}(\{t_1\}) = m_1 + m_2 \ge 2m_2 \ge 2f(\{v\}), \quad \forall v \in P.
  \end{aligned}
\end{equation}

We choose $v^\star \in P$ as the item with maximum token length, i.e., $v^\star = \argmax_{v\in P} \ell(\{v\})$. Set 
\begin{equation}\label{eq:def-r}
  r = \ell(P \setminus \{v^\star\}).
\end{equation}
Then we can show that the density chain starting with seed $J$ must have total token length larger than $r$.

\subsubsection{Bounding-function domination}
Run the density chain of \Cref{alg:bps} from seed $J$, and let $A_0 = \varnothing \subsetneq A_1 \subsetneq \cdots \subsetneq A_m$ be its accepted additions, so the actual recorded prefixes are $J \cup A_j$.
Write $a_j = A_j \setminus A_{j-1}$ for the $j$-th accepted item. Following \cite{kulik2021refined}, we define the piecewise-affine residual-benefit trajectory $V$ by $V(0) = 0$ and, for $\ell(A_{j-1}) \le u \le \ell(A_j)$,
\begin{equation}\label{eq:trajectory}
  V(u) := f(A_{j-1})  + \bigl(u - \ell(A_{j-1})\bigr) \frac{f(a_j \mid A_{j-1})}{\ell(\{a_j\})}.
\end{equation}
The function $V: [0, \ell(A_m)] \to \R_{\ge 0}$ traces the residual benefit accumulated by the density chain as a function of the total added length $u = \ell(A_j)$ beyond the seed $J$. 
At each breakpoint $u = \ell(A_j)$, the trajectory evaluates to $V(\ell(A_j)) = f(A_j)$, the exact residual benefit of the $j$-th recorded prefix. 
Between consecutive breakpoints, $V$ interpolates linearly. Then we have the following lemmas.
\begin{lemma}[Trajectory coverage]\label{lem:coverage}
  $V(r)$ is well-defined; that is, $\ell(A_m) \ge r$.
\end{lemma}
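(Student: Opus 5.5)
The plan is to argue by contradiction, using only the stopping rule of the density chain from seed $J$ and the maximality of $v^\star$. Suppose $\ell(A_m) < r = \ell(P\setminus\{v^\star\})$. The chain halts at $J\cup A_m$ only when no item outside $J\cup A_m$ fits, i.e.\ $\ell(J\cup A_m)+\ell_i > B$ for every $i\notin J\cup A_m$. This loop condition in Lines 5--9 is all we use. In particular it holds for every item of $P\setminus A_m$, since $P = T\setminus J$ is disjoint from $J$.

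The first step is to show that $P\setminus A_m$ contains an item other than $v^\star$. Because $\ell(A_m) < \ell(P\setminus\{v^\star\})$, the set $P\setminus\{v^\star\}$ cannot be contained in $A_m$; otherwise $\ell(A_m)\ge \ell(P\setminus\{v^\star\})$, as all lengths are positive. So there is some $w\in (P\setminus\{v^\star\})\setminus A_m$.

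The second step is the length comparison. By the choice of $v^\star$ as the longest item of $P$, we have $\ell_w \le \ell_{v^\star}$. Then
\[
\ell(J\cup A_m)+\ell_w < \ell(J) + r + \ell_w \le \ell(J) + \ell(P\setminus\{v^\star\}) + \ell_{v^\star} = \ell(J)+\ell(P) = \ell(T) \le B,
\]
using $J\cap A_m=\varnothing$ (the chain only adds items outside its current set), $J\cap P=\varnothing$, and $T\in\gF_B$. Hence $w$ still fits after $J\cup A_m$, which contradicts termination. Therefore $\ell(A_m)\ge r$, and $V(r)$ lies in the domain $[0,\ell(A_m)]$ of \eqref{eq:trajectory}.

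The only point needing care is that the strict inequality $\ell(A_m)<r$ must carry through to the fitting test, which uses $\le B$. The bound $\ell_w\le \ell_{v^\star}$ is exactly what makes the slack of $v^\star$ absorb $w$. Degenerate cases are harmless: $P$ is nonempty because $|T|\ge4$, and if $r=0$ the claim is trivial.
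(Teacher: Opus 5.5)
Your proof is correct and is essentially the paper's argument: both apply the chain's stopping rule to an item of $P$ that the chain never added, then use $\ell_w \le \ell(\{v^\star\})$ and $\ell(T)\le B$. The only difference is presentation. You argue by contradiction and take the witness from $P\setminus\{v^\star\}$, while the paper splits directly on whether $P \subseteq A_m$ and obtains the slightly stronger strict bound $\ell(A_m) > r$.
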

\begin{IEEEproof}
  If $P \subseteq A_m$, then $\ell(A_m) \ge \ell(P) > r$ by definition in \eqref{eq:def-r}. 
  Otherwise pick $v \in P \setminus A_m$. 
  When the chain stops, adding $v$ is infeasible. Then $\ell(A_m) + \ell(\{v\}) > B - \ell(J)$. Since $\ell(\{v\}) \le \ell(\{v^\star\})$ and $\ell(T) \le B$, we have $\ell(A_m) > B - \ell(J) - \ell(\{v\}) \ge \ell(T) - \ell(J) - \ell(\{v^\star\}) = r$.
\end{IEEEproof}
\begin{lemma}[Residual feasibility before $r$]\label{lem:feasible}
  Consider an accepted segment whose left endpoint $u_0 = \ell(A_{j-1})$ satisfies $u_0 < r$. Then every item $v \in P \setminus A_{j-1}$ is feasible at that point.
\end{lemma}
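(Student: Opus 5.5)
The plan is a direct length-accounting argument. It mirrors the infeasibility reasoning in the proof of \cref{lem:coverage}, but applies it to the state of the chain at step $j$ instead of at termination. The key observation is that "feasible at that point" means $\ell(J) + \ell(A_{j-1}) + \ell(\{v\}) \le B$: item $v$ can be added to the current recorded prefix $J \cup A_{j-1}$ without exceeding the budget. Since $v \in P \setminus A_{j-1}$ and $P \cap J = \varnothing$, $v$ is not already in the prefix. So only the length inequality needs checking.

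The first step is to fix an arbitrary $v \in P \setminus A_{j-1}$ and use the maximality of $v^\star$ in $P$, which gives $\ell(\{v\}) \le \ell(\{v^\star\})$. Next I use the hypothesis $u_0 = \ell(A_{j-1}) < r = \ell(P\setminus\{v^\star\})$ from \eqref{eq:def-r}. Then I rewrite $\ell(T) = \ell(J) + \ell(P) = \ell(J) + r + \ell(\{v^\star\})$, using the disjoint decomposition $T = J \cup P$ and $v^\star \in P$. Chaining these together gives
\[
\ell(J) + \ell(A_{j-1}) + \ell(\{v\}) < \ell(J) + r + \ell(\{v^\star\}) = \ell(T) \le B,
\]
where the last inequality is the feasibility of $T \in \gF_B$. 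So $v$ fits strictly, and in particular $v$ is among the candidates in Line 6 of \cref{alg:bps} at iteration $j$.

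There is no real obstacle; the only care needed is bookkeeping. First, I must be precise that the chain's prefix at that moment is $J \cup A_{j-1}$, so that lengths add because $A_{j-1}$ is disjoint from $J$. Second, I should remark that the conclusion uses only $u_0 < r$, not anything about the segment's right endpoint, since feasibility is decided at the moment the chain picks $a_j$. This is exactly what the later density argument needs: every comparator item $v \in P\setminus A_{j-1}$ is a legal competitor of $a_j$, so greedy density gives $f(a_j\mid A_{j-1})/\ell(\{a_j\}) \ge f(v\mid A_{j-1})/\ell(\{v\})$.
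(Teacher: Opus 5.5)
Your proposal is correct and follows essentially the same route as the paper's proof: both bound $\ell(J)+u_0+\ell(\{v\})$ strictly by $\ell(J)+r+\ell(\{v^\star\})=\ell(J)+\ell(P)=\ell(T)\le B$, using the maximality of $v^\star$ and the hypothesis $u_0<r$. Your extra bookkeeping is a correct clarification that the paper leaves implicit: you note that $v\notin J\cup A_{j-1}$, so the lengths add and $v$ is a genuine Line~6 candidate.
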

\begin{IEEEproof}
  By maximality of $\ell(\{v^\star\})$, we have $\ell(A_{j-1} \cup \{v\}) = u_0 + \ell(\{v\}) < r + \ell(\{v^\star\}) = \ell(P) = \ell(T) - \ell(J) \le B - \ell(J)$. Therefore, $v$ is feasible at the selection on $a_j$.
\end{IEEEproof}
\Cref{lem:feasible} is where removing the longest residual item pays off.

We adapt the bounding-function technique of \cite{kulik2021refined}. Partition $P$ into two nonempty blocks $\{v^\star\}$ and $R = P \setminus \{v^\star\}$, and order them as $(X_1, X_2)$ so that
\begin{equation}\label{eq:block-order}
  \frac{f(X_1)}{\ell(X_1)} \ge \frac{f(X_2)}{\ell(X_2)}.
\end{equation}
Set $d_j = \ell(X_j)$ for $j = 1, 2$, and define the block densities 
\begin{equation}
  \rho_1 = \frac{f(X_1)}{d_1}, \quad \rho_2 = \frac{f(X_2 \mid X_1)}{d_2},
\end{equation}
where $f(X_2 \mid X_1) = f(X_1 \cup X_2) - f(X_1)$. Submodularity and \eqref{eq:block-order} give $\rho_1 \ge \rho_2$. Write
\begin{equation}\label{eq:def-d}
  D = \ell(P) = d_1 + d_2.
\end{equation}
If $\rho_2 > 0$, let $D_1 = d_1 \ln \frac{\rho_1}{\rho_2}$. If $\rho_2 = 0$, set $D_1 = +\infty$. Define a continuous function $\varphi: [0, \infty) \to \R_{\ge 0}$ by
\begin{equation}\label{eq:phi-def}
  \varphi(u) =
  \begin{cases}
    f(X_1)\left(1 - \exp({-u/d_1})\right), & 0 \le u < D_1, \\
    f(P) - \rho_2 D \exp\left(-\dfrac{u - D_1}{D}\right), & u \ge D_1.
  \end{cases}
\end{equation}

The bounding function $\varphi$ is constructed so that: on the first branch, it tracks exponential saturation toward $f(X_1)$ at rate $1/d_1$; on the second branch, it tracks saturation toward $f(P)$ at rate $1/D$. The transition at $D_1$ is precisely where the two exponentials meet. The next two lemmas separate the argument into a dynamic half, showing that the trajectory of the density chain never falls below $\varphi$ before $r$, and a static half, showing that the seed and $\varphi(r)$ together already account for an $\alpha$ fraction of $\wh G(T)$. And the proofs are computational and deferred to Appendix~\ref{app:missing-proof}.

\begin{lemma}[Trajectory bound]\label{lem:domination}
  $V(u) \ge \varphi(u), \forall u \in [0, r]$.
\end{lemma}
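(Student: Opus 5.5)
We prove $V \ge \varphi$ on $[0,r]$ by a differential-inequality (comparison) argument. The piecewise-affine trajectory $V$ has a right derivative at every $u\in[0,r)$. We first lower-bound this derivative in terms of $V(u)$ itself, and then show that $\varphi$ satisfies the matching inequality with equality.

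\textbf{Step 1: slope bound on a segment.} Fix a segment $[\ell(A_{j-1}),\ell(A_j)]$ with left endpoint $u_0=\ell(A_{j-1})<r$. On it $V'(u)=f(a_j\mid A_{j-1})/\ell(\{a_j\})$. By \cref{lem:feasible}, every $v\in P\setminus A_{j-1}$ was a feasible candidate when $a_j$ was chosen. Greedy choice of $a_j$ therefore gives
\[
V'(u)\ge \frac{f(v\mid A_{j-1})}{\ell(\{v\})}\qquad\text{for each such } v.
\]
Now take any subset $Y\subseteq P$. By submodularity, $f(Y\mid A_{j-1})\le\sum_{v\in Y\setminus A_{j-1}} f(v\mid A_{j-1})$, so
\[
V'(u)\ge \frac{f(A_{j-1}\cup Y)-f(A_{j-1})}{\ell(Y)}\ge \frac{f(Y)-f(A_{j-1})}{\ell(Y)},
\]
using monotonicity in the last step. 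Since $V$ is nondecreasing and $V\le f(A_j)$ on the segment, this is not yet in the form we need. The standard fix from \cite{kulik2021refined} is to express everything through the segment's left endpoint value and then apply the bound pointwise on the segment; the slope is constant on the segment, so the bound at the left endpoint propagates. Applying the inequality with $Y=X_1$ and with $Y=P=X_1\cup X_2$ yields
\[
V'(u)\ge \frac{f(X_1)-V(u_0)}{d_1},\qquad V'(u)\ge \frac{f(P)-V(u_0)}{D}.
\]
Because $V(u)\ge V(u_0)$ on the segment, we want the bound in terms of $V(u)$. I would handle this as in \cite{kulik2021refined}: compare against the solution of the ODE started at $V(u_0)$. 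Concavity of the exponential solution lets the linear segment with the stronger initial slope stay above it across the whole segment.

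\textbf{Step 2: $\varphi$ solves the extremal ODE.} Check directly that on $[0,D_1)$ we have $\varphi'=(f(X_1)-\varphi)/d_1$, and on $[D_1,\infty)$ we have $\varphi'=(f(P)-\varphi)/D$. Continuity at $D_1$ follows from $e^{-D_1/d_1}=\rho_2/\rho_1$: both expressions give $f(X_1)-\rho_2 d_1$ there, since $f(P)-\rho_2D=f(X_1)+\rho_2 d_2-\rho_2 D$. Moreover, $\varphi$ is concave and nondecreasing. Its slope $\rho_1e^{-u/d_1}$ decreases to $\rho_2$ at $D_1$, then continues as $\rho_2e^{-(u-D_1)/D}$. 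Hence on each branch the active constraint is the larger of the two right-hand sides, i.e., $\varphi'=\max\{(f(X_1)-\varphi)/d_1,(f(P)-\varphi)/D\}$ wherever both are relevant. When $\rho_2=0$, only the first branch is used.

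\textbf{Step 3: comparison.} Induct over segments, with claim $V(u_0)\ge\varphi(u_0)$ at each left endpoint $u_0<r$; the base case is $V(0)=0=\varphi(0)$. Within a segment, $V$ is affine with slope $s\ge\max\{(f(X_1)-V(u_0))/d_1,(f(P)-V(u_0))/D\}$. Suppose $V(u_0)\ge\varphi(u_0)$ and let $u^\ast$ be the first crossing point in the segment where $V(u^\ast)=\varphi(u^\ast)$. Then $V(u^\ast)\ge V(u_0)$, so $s\ge \varphi'(u^\ast)$ by the max formula, monotonicity of $\varphi'$ in $\varphi$, and concavity. Thus $V$ cannot drop below $\varphi$ after $u^\ast$ on this affine piece. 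The formal way to phrase this is that $V-\varphi$ has nonnegative right derivative whenever it equals $0$. This inductive crossing argument is the main obstacle. The greedy slope bound holds only with the left-endpoint value, not the running value, so at a touching point $u^\ast$ inside the segment one must argue $s\ge\varphi'(u^\ast)$ using $\varphi(u^\ast)=V(u^\ast)\ge V(u_0)$ and the decrease of the right-hand sides in their argument. Finally, the last segment may extend past $r$, but we restrict to $[0,r]$, and \cref{lem:coverage} ensures $V$ is defined there.
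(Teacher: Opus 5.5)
Your proof is correct. Its first two ingredients match the paper's. The first is the greedy density bound on every segment whose left endpoint lies below $r$ (via \cref{lem:feasible}, submodularity and monotonicity), applied with $Q=X_1$ and $Q=P$. The second is the observation that $\varphi$ solves $\varphi'=(f(X_1)-\varphi)/d_1$ before $D_1$ and $\varphi'=(f(P)-\varphi)/D$ after it.

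Where you differ is the comparison step. The paper sets $W=V-\varphi$ and derives $W'\ge -W/d_1$ (resp.\ $-W/D$) on the interior of each cell. It then uses the integrating factor $e^{u/d_1}W$ (resp.\ $e^{u/D}W$), which is nondecreasing, to propagate $W\ge 0$ from $u=0$ and then from $u=D_1$, treating the two branches separately. You instead use that $\varphi$ is $C^1$ and concave on all of $[0,\infty)$: its slope $\rho_1e^{-u/d_1}$ reaches $\rho_2$ at $D_1$ and then continues as $\rho_2e^{-(u-D_1)/D}$. So at any touching point $u^\ast$ with $V(u^\ast)=\varphi(u^\ast)$, you get $s\ge\varphi'(u^\ast)$. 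The tangent-line inequality $\varphi(u)\le\varphi(u^\ast)+\varphi'(u^\ast)(u-u^\ast)\le V(u)$ then holds on the rest of that affine piece.

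Your route avoids splitting $[0,r]$ at $D_1$ and avoids any integration. The price is that you must verify global concavity and the $C^1$ junction at $D_1$. The paper's Gr\"onwall-type argument needs no concavity of $\varphi$ at all, and is the more mechanical template if one later adds more blocks.

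A few points to tighten:
\begin{itemize}
\item What you call the main obstacle is a one-liner. The map $y\mapsto(f(Q)-y)/\ell(Q)$ is decreasing, and $V(u)\ge V(u_0)=f(A_{j-1})$ on the segment. So the left-endpoint bound immediately gives $\sigma_j\ge(f(Q)-V(u))/\ell(Q)$ at every point of the segment, which is exactly how the paper states its slope bound. No auxiliary ODE started at $V(u_0)$ is needed, and your Step~3 in fact uses only this monotonicity.
\item Your ``formal phrasing'' is not a valid criterion by itself. Requiring that $V-\varphi$ have nonnegative right derivative wherever it vanishes does not prevent it from going negative; consider $-u^2$ at $u=0$. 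What makes your argument work is that $V-\varphi$ is convex on each affine piece of $V$ (affine minus concave). The tangent-line step is therefore the actual proof and should be stated as such, e.g.\ via the last point in the piece where $V\ge\varphi$.
\item Check the base case $\varphi(0)=0$ when $D_1=0$. There the second branch applies, and you need $\rho_1=\rho_2$ to get $f(P)-\rho_2D=0$.
\end{itemize}
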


\begin{lemma}[Static bound at $r$]\label{lem:static}
$\wh G(J) + \varphi(r) \ge \alpha \wh G(T)$.
\end{lemma}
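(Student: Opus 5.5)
The plan is to reduce the lemma to an explicit inequality in a few real parameters and then verify that inequality by calculus. By \eqref{eq:def-residual}, $\wh G(T) = \wh G(J) + f(P)$. Writing $g := \wh G(J)$, the claim is equivalent to
\[
  (1-\alpha)\, g + \varphi(r) \ge \alpha f(P).
\]
The only link between the seed and the residual is \eqref{eq:seed-two-marginals}. It gives $g \ge 2 f(\{v^\star\})$, so the seed term can absorb up to $\tfrac{2}{e}$ times $f(\{v^\star\})$. The proof therefore has to show that evaluating $\varphi$ at $r = D - \ell(\{v^\star\})$ rather than at $D$ loses little, and that what it loses is controlled by $f(\{v^\star\})$.

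\textbf{Step 1 (value at $D$).} First I will record the "full-length" bound $\varphi(D) \ge \alpha f(P)$, together with the stronger closed forms of $\varphi(D)$ in both branches of \eqref{eq:phi-def}. This is the standard two-block computation of \cite{kulik2021refined}.

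\begin{itemize}
\item If $D_1 \ge D$, then $\varphi(D) = f(X_1)(1-e^{-D/d_1})$.
\item Otherwise, $\varphi(D) = f(P) - \rho_2 D\, e^{-(D-D_1)/D}$, where $e^{D_1/D} = (\rho_1/\rho_2)^{d_1/D}$.
\end{itemize}

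Using $f(P) = \rho_1 d_1 + \rho_2 d_2$, the excess $\varphi(D) - \alpha f(P)$ can be written in terms of the ratio $\rho_1/\rho_2$ and the fraction $d_1/D$. I expect this excess to be strictly positive whenever the two blocks have unequal densities.

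\textbf{Step 2 (the loss on $[r,D]$), split by which block is $\{v^\star\}$.}

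\begin{itemize}
\item \emph{Case $X_1 = \{v^\star\}$, so $d_1 = \ell(\{v^\star\})$.} On the whole of $[0,\infty)$, the slope of $\varphi$ is at most $\rho_1$: it equals $\rho_1 e^{-u/d_1}$ on the first branch and is at most $\rho_2 \le \rho_1$ on the second. Hence
\[
  \varphi(D) - \varphi(r) \le \rho_1 d_1 = f(\{v^\star\}) \le g/2 .
\]
\item \emph{Case $X_1 = R$, so $d_1 = r$ and $d_2 = \ell(\{v^\star\})$.} Here $[r,D]$ lies beyond the first block's length. The slope on $[r,D]$ is at most $\max\{\rho_2,\ \rho_1 e^{-1}\}$. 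By submodularity, $\rho_2 d_2 = f(v^\star \mid R) \le f(\{v^\star\})$.
\end{itemize}

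\textbf{Step 3 (combine).} Neither crude bound of Step 2 suffices alone. Combining $\varphi(r) \ge \varphi(D) - f(\{v^\star\})$ with $\varphi(D) \ge \alpha f(P)$ needs $(1-\alpha) g \ge f(\{v^\star\})$, but \eqref{eq:seed-two-marginals} only gives $g/2 \ge f(\{v^\star\})$, and $1-\alpha < 1/2$. So the slack from Step 1 must be used. The plan is to compute $\varphi(r)$ exactly in each case and set $x := f(\{v^\star\}) \le g/2$. Then, for fixed $f(P)$, minimize
\[
  (1-\alpha)\, g + \varphi(r) - \alpha f(P)
\]
over the remaining free parameters: $\rho_1/\rho_2$, $d_1/D$, and $x$.

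\textbf{Main obstacle.} The key point is that a large $f(\{v^\star\})$ forces either a large seed value $g$ or a dense $v^\star$ block, and a dense $v^\star$ block makes $\varphi$ rise steeply early. In the case $X_1 = \{v^\star\}$, this means $\varphi(r)$ is already close to $f(X_1)$ when $r \ge d_1$. I expect the minimum to be attained at an extreme configuration: $g = 2x$ with equal densities, where $\varphi(r) = f(P)(1-e^{-r/D})$. There the inequality reduces to a one-variable inequality in $t = \ell(\{v^\star\})/D$ and $x/f(P)$, which I will check by monotonicity and convexity of the exponential.

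The hardest step is this final minimization, specifically proving that the extreme configuration really is the worst case, i.e., that unequal block densities only help. I would first attempt it by showing that the expression is monotone in $\rho_1/\rho_2$ for fixed lengths. If that fails, I would fall back to a direct case analysis on whether $r \le D_1$ or $r > D_1$.
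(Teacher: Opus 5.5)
Your reduction to $(1-\alpha)\,g+\varphi(r)\ge\alpha f(P)$ with $g=\wh G(J)$ is correct. So is your diagnosis that $\varphi(D)\ge\alpha f(P)$ plus the slope bound $\varphi(D)-\varphi(r)\le f(\{v^\star\})$ falls short because $2<e$. But the proposal stops where the proof has to happen: the final minimization is never carried out. The principle you plan to use for it, that the worst case is $\rho_1=\rho_2$ with $g=2x$ and unequal densities only help, is false in the ordering $X_1=R$.

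Concretely, normalize $f(P)=D=1$ and take $p=\ell(\{v^\star\})=0.1$, $r=0.9$. Let $y:=f(v^\star\mid R)=f(\{v^\star\})$, $f(R)=1-y$, and $g=2y$. The order $X_1=R$ requires $y\le 0.1$, and $\rho_1/\rho_2=(1-y)/(9y)$. At $y=0.1$ (equal densities) the slack is $\Phi:=2y/e+\varphi(r)-\alpha=0.2/e+(1-e^{-0.9})-\alpha\approx 0.035$. Once $\rho_1/\rho_2\ge e$, i.e.\ $y\le 1/(1+9e)\approx 0.039$, you have $D_1\ge r$. Since $d_1=r$, this gives $\varphi(r)=f(R)(1-e^{-r/d_1})=\alpha(1-y)$ exactly, so $\Phi=(3/e-1)\,y\approx 0.10\,y$, which tends to $0$ as $\rho_1/\rho_2\to\infty$. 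In this example the slack decreases as $\rho_1/\rho_2$ grows. The extremal configuration is the opposite one: a nearly worthless $v^\star$ with $\varphi(r)=\alpha f(R)$, the classical tight $1-1/e$ point. So a monotonicity argument toward $\rho_1=\rho_2$ cannot succeed here. In the ordering $X_1=\{v^\star\}$ your heuristic is in fact right: there the slack increases with $\rho_1/\rho_2$, and the equal-density check reduces to $e^{t}\le 1+2t$ for $t=p/D\in(0,1)$.

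Your fallback case split on $r$ versus $D_1$ is the right structure, but it needs estimates the proposal does not contain. Write $x$ and $z$ for the chain-rule contributions of $v^\star$ and $R$ in the block order, so $x+z=f(P)$, $g\ge 2x$, and $H:=\wh G(T)=g+x+z$. The three cases go as follows.

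\emph{Case $r<D_1$, $X_1=R$.} Use $\varphi(r)=\alpha z$ and let the seed cover the rest: $g\ge 2x$ gives $g\ge\tfrac23(g+x)\ge\alpha(g+x)$.

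\emph{Case $r<D_1$, $X_1=\{v^\star\}$.} For $z>0$ the case condition itself reads $x>z\,e^{\delta}/\delta$ with $\delta=r/p$. This already pushes $g+\varphi(r)$ above $\tfrac23 H$.

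\emph{Case $r\ge D_1$, either order.} This is the real work. The missing idea is not the slack of $\varphi(D)$ over $\alpha f(P)$ (the paper never evaluates $\varphi$ at $D$) but pooling the seed with $v^\star$. The deficit equals $\rho_2 D\exp\bigl(-(r-D_1)/D\bigr)=D\exp\bigl(-1+(p+p\ln(x/p)+r\ln(z/r))/D\bigr)$, where $x,z>0$ because $D_1<\infty$ forces $\rho_2>0$. Now use $H-z=g+x\ge 3x$, then the log-sum inequality with weights $(p,r)$ and values $(H-z,z)$, then $\ln 3>1$. The last step absorbs the extra $p/D$ caused by stopping at $r=D-p$, and together they bound the deficit by $H/e$.

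Without an argument of this kind the lemma is not proved.
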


\subsubsection{Budget-aligned interpolation}
\label{sec:guarantee:interpolation}
\Cref{lem:domination,lem:static} give
\begin{equation}\label{eq:traj-at-r}
  \wh G(J) + V(r) \ge \wh G(J) + \varphi(r) \ge \alpha\,\wh G(T).
\end{equation}
Choose consecutive recorded prefixes $A_{j-1}, A_j$ whose segment contains $r$. There exists $\lambda \in [0,1]$ with $V(r) = (1-\lambda)\,f(A_{j-1}) + \lambda\,f(A_j),$ and $r = (1-\lambda)\,\ell(A_{j-1}) + \lambda\,\ell(A_j)$.
Because the penalty $\wh\kappa\,\ell(\cdot)$ is the same linear function of the same length coordinate, the two interpolations combine: $(1-\lambda)\wh F(J \cup A_{j-1}) + \lambda\wh F(J \cup A_j) = \wh G(J) + V(r) - \wh\kappa(\ell(J) + r)$. Since $\ell(J) + r = \ell(T) - \ell(\{v^\star\}) \le \ell(T)$, \eqref{eq:traj-at-r} gives 
$(1-\lambda)\wh F(J \cup A_{j-1}) + \lambda\wh F(J \cup A_j) \ge \alpha\wh G(T) - \wh\kappa(\ell(T) - \ell(\{v^\star\})) \ge \alpha\wh G(T) - \wh\kappa\ell(T)$.
A convex combination of two numbers is at most their maximum, so at least one of the two recorded prefixes satisfies
\begin{equation}\label{eq:witness}
  \wh F(J \cup A_s) \ge \alpha\wh G(T) - \wh\kappa\ell(T),
  \quad s \in \{j-1, j\}.
\end{equation}

Therefore, in every case, some recorded candidate $S_T \in \cP$ satisfies $\wh F(S_T) \ge \alpha\,\wh G(T) - \wh\kappa\,\ell(T)$. Since $S_{\BPS}$ maximizes $\wh F$ over $\cP$ (Line~11 of \cref{alg:bps}), $\wh F(S_{\BPS}) \ge \wh F(S_T)$. The seed and witness prefix depend on $T$, but every seed of size at most two is enumerated and one selection maximizes $\wh F$ over all recorded candidates, so the single output $S_{\BPS}$ satisfies \eqref{eq:guarantee} for every $T \in \gF_B$. This statement finishes the proof of \cref{thm:main}. \hfill $\blacksquare$

\section{Evaluation}
\label{sec:evaluation}

\Cref{cor:regret} bounds the selection regret of \BPS{}
by a fitting term and an optimization term, and we measure both on
real executions of a frozen executor. \Cref{sec:evaluation:e1}
measures the first, how closely the fitted objective tracks those
executions, and \cref{sec:evaluation:opt} the second, how far the
sets \BPS{} returns fall short of the exact optimum of that
objective, which \cref{thm:main} bounds only in the worst case.
\Cref{sec:evaluation:e2} then executes the selected sets and reports
the task success they achieve.

\subsection{Testbed: A Contamination-Controlled Skill Benchmark}
\label{sec:evaluation:testbed}

\begin{wrapfigure}{r}{0.47\textwidth}
\centering
\vspace{-10pt}
\ifdefined\stepcolzip\else\newlength{\stepcolzip}\newlength{\stepcolenc}\fi
\settowidth{\stepcolzip}{\scriptsize\ttfamily zlib.compress}
\settowidth{\stepcolenc}{\scriptsize\ttfamily pk64.enrobe}
\addtolength{\stepcolzip}{6pt}
\addtolength{\stepcolenc}{6pt}
\begin{tikzpicture}[font=\scriptsize, every node/.style={outer sep=0pt},
  pipe/.style={draw=black!50, rounded corners=1.5pt, inner sep=3pt,
               align=center, fill=gray!8, font=\scriptsize\ttfamily},
  rate/.style={rounded corners=2pt, inner xsep=4pt, inner ysep=2pt,
               font=\scriptsize}]
\definecolor{famzip}{HTML}{0048AC}
\definecolor{fam64}{HTML}{358519}
\definecolor{ratehue}{HTML}{9B5FA8}
\draw[black!60, rounded corners=2pt, fill=black!2] (-11.9,0.50) rectangle (-4.5,-0.58);
\node[anchor=north west, font=\scriptsize\bfseries] at (-11.78,0.40)
  {Original task};
\node[rate, anchor=north east, draw=ratehue!55, fill=ratehue!7,
      text=ratehue!75!black] at (-4.62,0.42)
  {pretraining alone: $\mathbf{85\%}$ pass};
\node[pipe, anchor=west] (a1) at (-11.60,-0.25) {dict};
\node[pipe, right=0.42 of a1, minimum width=\stepcolzip, draw=famzip!55,
      fill=famzip!4, text=black!72] (a2) {zlib.compress};
\node[pipe, right=0.42 of a2, minimum width=\stepcolenc, draw=fam64!55,
      fill=fam64!4, text=black!72] (a3) {b64encode};
\node[pipe, right=0.42 of a3] (a4) {POST};
\draw[->, black!60] (a1) -- (a2);
\draw[->, black!60] (a2) -- (a3);
\draw[->, black!60] (a3) -- (a4);
\draw[->, thick, black!70] (-9.50,-0.58) -- (-9.50,-1.10)
  node[midway, right=4pt, font=\scriptsize] {swap libraries for private forks};
\draw[black!60, rounded corners=2pt, fill=black!5] (-11.9,-1.10) rectangle (-4.5,-2.18);
\node[anchor=north west, font=\scriptsize\bfseries] at (-11.78,-1.20)
  {Forked task};
\node[rate, anchor=north east, draw=ratehue, fill=ratehue!18,
      text=ratehue!75!black] at (-4.62,-1.18)
  {no skills injected: $\mathbf{0\%}$ pass};
\node[pipe, anchor=west] (b1) at (-11.60,-1.85) {dict};
\node[pipe, right=0.42 of b1, minimum width=\stepcolzip, draw=famzip,
      fill=famzip!8, text=famzip] (b2) {pkzip.lade};
\node[pipe, right=0.42 of b2, minimum width=\stepcolenc, draw=fam64,
      fill=fam64!8, text=fam64] (b3) {pk64.enrobe};
\node[pipe, right=0.42 of b3] (b4) {POST};
\draw[->, black!60] (b1) -- (b2);
\draw[->, black!60] (b2) -- (b3);
\draw[->, black!60] (b3) -- (b4);
\end{tikzpicture}
\caption{Forking a BigCodeBench task. Standard libraries are swapped
for private forks with new names and altered constants:
\texttt{pkzip.lade} plays the role of \texttt{zlib.compress} but XORs
the payload and prepends a private header, and \texttt{pk64.enrobe}
base64-encodes under a remapped alphabet.}
\label{fig:testbed}
\vspace{-30pt}
\end{wrapfigure}

Public code benchmarks cannot measure skill selection,
because a capable executor already solves them with no skill at all:
Qwen3-32B, the frozen executor throughout, passes $85\%$ of runs on
the original BigCodeBench~\cite{zhuo2024bigcodebench} tasks from
pretraining alone. We therefore build our own, and everything below
rests on \revise{$63{,}596$} executions of it against real test suites.

\paragraph{Private modules} We
fork the benchmark, replacing its standard libraries with sixteen
private modules over $d=5$ capability families. Each module's call
surface is unguessable from the task prose and
documented only in the skill library (\cref{fig:testbed}).

\paragraph{Task admission} A task is
admitted only if the private-module solution passes its tests while
both the standard-library solution and every one-capability hybrid of
the two fail, so that only tasks needing all of their capabilities
survive. Fewer than one in three of the tasks we forked cleared this
gate, and those that did form the testbed.

\paragraph{Skill library} We wrote $47$ skill
documents, of which $L=31$ form the library: single-module skills
ranging from a short snippet to a full tutorial, two-module
handbooks, and distractors covering only look-alike functions no task
calls for.

\subsection{Validity of the Structured Objective}
\label{sec:evaluation:e1}
\begin{figure}[!t]
\centering
\includegraphics[width=0.8\linewidth]{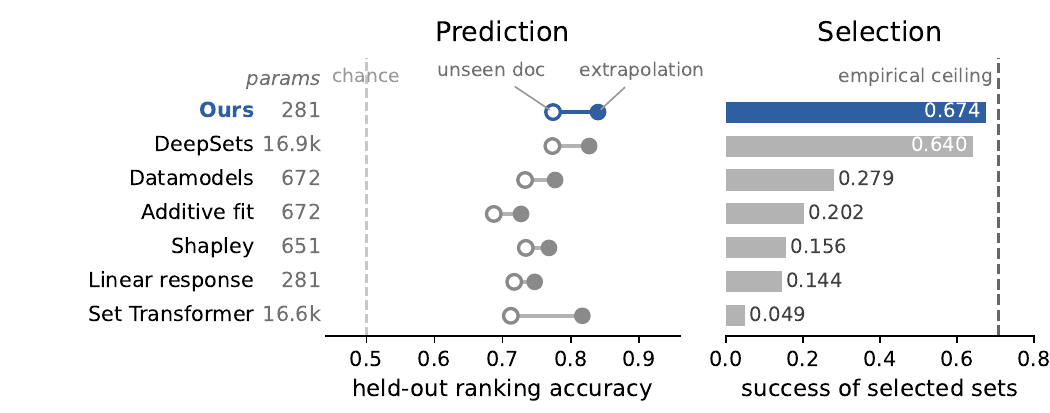}
\caption{\revise{Value-model comparison. \emph{Left:} pairwise ranking
accuracy on held-out set pairs, under the extrapolation (filled
dots) and unseen-doc (open dots) protocols. \emph{Right:} measured
success of the sets each model selects; the dashed line is the
empirical ceiling, the best set per instance in hindsight.}}
\label{fig:models}
\end{figure}

We instantiate the fitted objective
\eqref{eq:def-hat-F} on the testbed and evaluate it on two counts,
its prediction of held-out executions and the quality of the sets it
selects; its recovery of the true coverage matrix was reported in
\cref{sec:model:validity}.

\paragraph{Instantiation and fitting} We set $d=5$,
one dimension per module family, and fix $h_k(x)=1-e^{-x}$ a
priori, as in \cref{sec:model:validity}. The task set and the
library are both fixed, so the encoders reduce to lookup tables that
assume nothing beyond the structured form \eqref{eq:def-G}: one
supply vector $\wh\vu_i$ per skill, one demand vector $\wh\vw^q$ per
task, and one offset per task, \revise{$281$} parameters in total. \revise{These are
trained jointly by gradient descent, minimizing the log loss of the
measured pass/fail outcomes against the predicted success
probability $\exp\widehat F_E$ \eqref{eq:def-hat-F}.}

\paragraph{Baselines} We compare against the value
models in common use, all fit on the same execution records.
Additive fit gives each skill an independent per-task value and
sums it over the set; Shapley scores~\cite{ghorbani2019datashapley}
do the same with values estimated from measured marginal
contributions. Datamodels~\cite{ilyas2022datamodels} regresses the
measured rate on the set indicator.
DeepSets~\cite{zaheer2017deepsets} and Set
Transformer~\cite{lee2019settransformer} are black-box neural
set regressors. Linear response is an ablation of our own model,
keeping the capability structure but dropping the concave
saturation.

\paragraph{Prediction} Each model scores pairs of
sets whose measured success rates differ by a clear margin, and
\cref{fig:models} (left) reports how often it orders the pair
correctly. Two protocols probe unseen set compositions. Under
extrapolation, a model trains only on sets of at most two skills and
must predict sets of three or more; under unseen doc, it sees a
skill only on its own, never in combination, and must predict the
sets that pair it with others. The structured objective is the most
accurate under both, and its predicted rates fall within one
percentage point of the measured ones.

\paragraph{Selection} Prediction accuracy matters only
insofar as it changes which set is chosen, so we also let each model
choose. Each is refit from a small sample of each test task and
returns the set it scores highest, which is then executed
(\cref{fig:models}, right). The structured objective reaches \revise{$95\%$}
of the empirical ceiling, and every interpretable alternative gives
up at least \revise{$0.37$} in absolute success. Only DeepSets remains
competitive, at \revise{$60\times$} the parameters and with an advantage that
never separates from zero; being a black box, it also exposes no
structure for \BPS{} to exploit and can be maximized only by brute
force.

\subsection{Optimization Quality}
\label{sec:evaluation:opt}

To isolate the optimization term $\delta$, we freeze the
objective at the fit of \cref{sec:evaluation:e1} and vary only the
rule that maximizes it, so that the comparison measures search
quality alone. The $80$ instances of the fitted selection problem
\eqref{eq:selection-fitted} span held-out tasks, token budget
levels $B$, and \revise{settings of the token soft-penalty
coefficient}.
\paragraph{Baselines} Five rules receive the same fitted
objective but do not maximize it. Three score every skill in isolation, by the benefit \eqref{eq:def-hat-F} assigns the singleton
$\{s_i\}$, and never evaluate the set they assemble or charge for
the tokens it costs: top-$k$ relevance fills the budget in that order,
while MMR~\cite{carbonell1998mmr} and DPP-MAP~\cite{kulesza2012dpp}
discount that score by similarity to the fitted supplies already
selected. 
Two do score whole sets but search them heuristically:
density greedy adds the skill with the best value per token until the
budget closes, and best-of-$100$ random keeps the highest-scoring of
$100$ random budget-feasible sets.

\begin{figure}[!t]
\centering
\includegraphics[width=0.9\linewidth]{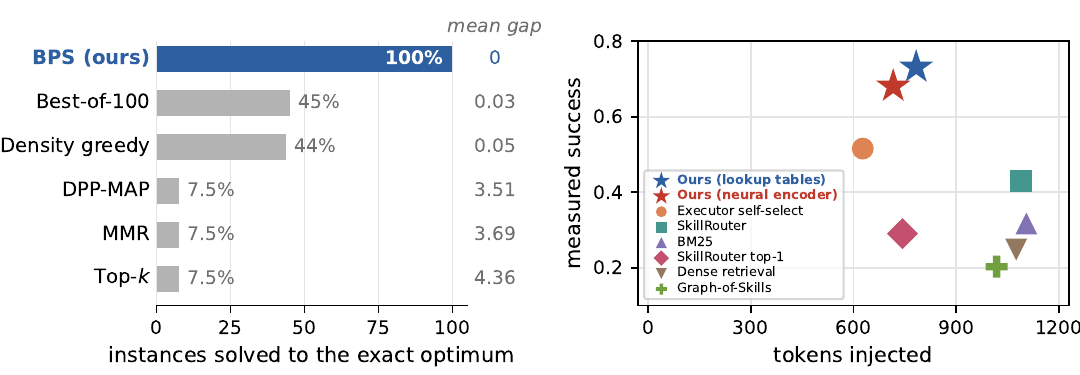}
\caption{\revise{The $80$ selection instances. \emph{Left:} optimization
quality. Every rule is given the same fitted objective; bars are the
share of instances on which a rule attains the exact optimum, found
by exhaustive search over every feasible set, and the right-hand
column is its mean shortfall in objective value. \emph{Right:}
end-to-end selection. Each rule is placed by the tokens it injects
and the measured success of the sets it chooses; up and to the left
is better. Both of our points run \BPS{} and differ only in how the
objective's encoders are instantiated.}}
\label{fig:optquality}\label{fig:selection}
\end{figure}

\paragraph{Solution quality} \BPS{} attains the exact
optimum of \eqref{eq:selection-fitted} on all $80$ instances, so the
optimization error $\delta$ of \cref{prop:transfer} vanishes
throughout (\cref{fig:optquality}, left). The three rules that score skills
one at a time reach the optimum on \revise{fewer than a tenth} of them, with a mean
shortfall \revise{nearly} two orders of magnitude larger than that of any rule
scoring whole sets; the two set-scoring heuristics come closer, but
still reach it on only \revise{$45\%$ and $44\%$}.



\subsection{End-to-End Selection Quality}
\label{sec:evaluation:e2}

The end-to-end test executes every rule's chosen set on
the frozen executor over the same $80$ instances, and measures
success on executions the fit never saw.

\paragraph{Baselines} We compare against the skill
selection systems in use today. BM25 and a dense bi-encoder, the two
retrievers that the skill-retrieval literature defaults
to~\cite{su2026sra}, rank the documents by their text against the
task prompt and fill the budget in that
order. SkillRouter~\cite{zheng2026skillrouter} is a
released retrieve-and-rerank router, which we run both at its own
top-$1$ operating point and, more generously, in its reranked order
up to our full budget; Graph-of-Skills~\cite{liu2026graphofskills}
diffuses over an offline skill graph and hydrates under a context
cap. Both run from their authors' released implementations on our
library. Finally we let the executor select for itself under the
progressive disclosure that deployed systems use, shown every skill's
name, cost and one-line description with the bodies hidden.

\paragraph{Measured execution} No deployed system we
could run matches \BPS{} in measured success (\cref{fig:selection},
right).
The released routers and retrievers reach \revise{$0.20$--$0.43$}, between
\revise{$0.30$ and $0.53$} below it, and \BPS{} attains its own result on
\revise{$28\%$} fewer tokens than the strongest of them; the strongest
deployed selector, the executor picking for itself, still gives up
\revise{$0.22$}. These systems rarely select distractors. What they select
instead are the skills whose text matches the task, and those need
not be the skills that cover the capabilities it exercises.

\paragraph{A neural capability encoder} \revise{The encoders of
\cref{sec:evaluation:e1} instantiate the structured objective on the
benchmark's fixed task set and library; deployment beyond them
requires encoding tasks and skills from their text. We therefore
replace the lookup tables with a neural encoder that projects frozen
\texttt{text-embedding-v4}~\cite{zhang2025qwen3embedding} embeddings of the task prompt and of each
skill document into $64$ latent capability dimensions, warmed up
on annotated examples of covering skill sets and then trained online
on pass/fail feedback from the frozen executor; \BPS{} still performs
the selection. Its sets reach $0.68$ measured success on $716$
injected tokens (red star in \cref{fig:selection}): $0.05$ below the
lookup-table instantiation, and $0.17$--$0.48$ above every deployed
system, injecting fewer tokens than all of them except the
executor's own selection.}

\section{Conclusion}
\label{sec:conclusion}

This paper casts skill selection for LLM agents as regularized submodular maximization under a hard token budget: a submodular capability benefit makes complementarity and redundancy explicit, and a linear penalty charges every injected token.
Our polynomial-time selection rule \BPS{} carries a bicriteria $(1-1/e,\,1)$ guarantee, proved via budget-aligned interpolation and tight in the benefit coefficient.
On a contamination-controlled benchmark with real executions, the fitted objective was the most accurate value model, and \BPS{} outperformed every deployed skill selector in measured success while injecting fewer tokens than any released system.
Future work includes online selection over streaming queries and degradation models beyond a linear per-token charge.

\bibliography{IEEEabrv,references}
\bibliographystyle{dilab_ref}

\makeappendixtoc
\appendices

\section{\texorpdfstring{Missing Proof in \cref{sec:guarantee:proof}}{Missing Proof of the Main Theorem}}
\label{app:missing-proof}

Throughout the appendices we use the notation of \cref{sec:guarantee:proof}:
the seed $J = \{t_1, t_2\}$ of \eqref{eq:seed-def}, the residual function $f$
of \eqref{eq:def-residual}, the comparator residual $P = T \setminus J$ with
$|T| \ge 4$, the longest residual item $v^\star$, the remainder
$R = P \setminus \{v^\star\}$, the lengths $p := \ell(\{v^\star\})$ and
$r = \ell(R)$ of \eqref{eq:def-r}, the block order $(X_1, X_2)$ of
\eqref{eq:block-order} with $d_j = \ell(X_j)$ and densities
$\rho_1 = f(X_1)/d_1$, $\rho_2 = f(X_2 \mid X_1)/d_2$, the total residual
length $D = \ell(P) = d_1 + d_2$ of \eqref{eq:def-d}, the crossover point
$D_1$, the bounding function $\varphi$ of \eqref{eq:phi-def}, the trajectory
$V$ of \eqref{eq:trajectory}, and $\alpha = 1 - 1/e$. Since $|T| \ge 4$ we
have $|P| \ge 2$, so both blocks $\{v^\star\}$ and $R$ are nonempty; all item
lengths are positive, hence
\begin{equation}\label{eq:app-positive-lengths}
  p > 0, \qquad r > 0, \qquad d_1 > 0, \qquad d_2 > 0, \qquad D = p + r.
\end{equation}
Recall from \cref{sec:guarantee:proof} that $f$ is normalized
($f(\varnothing) = 0$), nonnegative, nondecreasing, and submodular, and that
submodularity with \eqref{eq:block-order} gives $\rho_1 \ge \rho_2 \ge 0$.

We first record two auxiliary lemmas whose proofs are elementary computations; both are used repeatedly below.

\begin{lemma}[Residual marginals and the trajectory]\label{lem:app-residual}
  \leavevmode
  \begin{enumerate}
    \item[(i)] For every $U \subseteq [L]$ and $v \in [L]$,
      $f(v \mid U) = \wh G(v \mid J \cup U) \ge 0$.
    \item[(ii)] $V$ is continuous, piecewise affine, and nondecreasing on
      $[0, \ell(A_m)]$; it satisfies $V(\ell(A_j)) = f(A_j)$ for every
      $j$, and it is defined on all of $[0, r]$.
    \item[(iii)] Consider the chain step that selects $a_j$ from the state
      $J \cup A_{j-1}$, and let
      $\sigma_j = f(a_j \mid A_{j-1})/\ell_{a_j}$ denote its density. Then
      $f(v \mid A_{j-1})/\ell(\{v\}) \le \sigma_j$ for every item $v \notin
      J \cup A_{j-1}$ that is feasible at that step.
  \end{enumerate}
\end{lemma}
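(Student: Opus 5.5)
The three parts of \cref{lem:app-residual} are elementary, and I would prove them in order, each straight from the definitions.

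\textbf{Part (i).} Expand the definition \eqref{eq:def-residual}:
\[
f(v\mid U)=f(U\cup\{v\})-f(U)=\bigl(\wh G(J\cup U\cup\{v\})-\wh G(J)\bigr)-\bigl(\wh G(J\cup U)-\wh G(J)\bigr).
\]
The two $\wh G(J)$ terms cancel, leaving $\wh G(v\mid J\cup U)$. This quantity is nonnegative because $\wh G$ is nondecreasing. When $v\in J\cup U$ the marginal is simply zero, so that case needs no separate treatment.

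\textbf{Part (ii).} Continuity comes from \eqref{eq:trajectory} by induction on $j$. At the right endpoint $u=\ell(A_j)$ of the $j$-th segment, the formula gives
\[
V(\ell(A_j)) = f(A_{j-1})+\ell(\{a_j\})\cdot\frac{f(a_j\mid A_{j-1})}{\ell(\{a_j\})}=f(A_j).
\]
This is exactly the left value used by segment $j+1$, so adjacent pieces agree at every breakpoint and $V(\ell(A_j))=f(A_j)$ holds for all $j$. Segments are nondegenerate because all lengths are positive. Each segment is affine with slope $\sigma_j\ge 0$ by part (i), so $V$ is nondecreasing. The domain claim, that $V$ is defined on all of $[0,r]$, is precisely $\ell(A_m)\ge r$, which is \cref{lem:coverage}.

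\textbf{Part (iii).} At the step that selects $a_j$, the current set is $S=J\cup A_{j-1}$. Line 6 of \cref{alg:bps} picks $a_j$ to maximize $\wh G(i\mid S)/\ell_i$ over all items $i\notin S$ with $\ell(S)+\ell_i\le B$. Any feasible $v\notin J\cup A_{j-1}$ is one of these candidates, so
\[
\frac{\wh G(v\mid S)}{\ell_v}\le\frac{\wh G(a_j\mid S)}{\ell_{a_j}}.
\]
Rewriting both marginals with part (i), with $U=A_{j-1}$, turns this into
\[
\frac{f(v\mid A_{j-1})}{\ell(\{v\})}\le\frac{f(a_j\mid A_{j-1})}{\ell_{a_j}}=\sigma_j.
\]

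\textbf{Expected obstacle.} There is no real difficulty here. The only care needed is consistency of the feasibility notion: in the chain it means $\ell(J)+\ell(A_{j-1})+\ell_v\le B$, which is the same sense in which \cref{lem:feasible} certifies feasibility. Once that is matched, part (iii) can later be combined with \cref{lem:feasible} without a gap.
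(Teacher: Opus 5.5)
Your proposal is correct and follows the paper's proof essentially step for step. You expand the definition of $f$ to get (i). For (ii) you use the telescoping value $f(A_j)$ at each breakpoint, nonnegative slopes from (i), and \cref{lem:coverage} for the domain. For (iii) you rewrite the Line~6 density comparison at the state $J \cup A_{j-1}$ in terms of $f$ via (i). Your added remarks about nondegenerate segments and matching the feasibility notion with \cref{lem:feasible} are accurate and harmless.
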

\begin{IEEEproof}
  (i) Expanding definition \eqref{eq:def-residual} twice,
  \begin{equation*}
    f(v \mid U)
    = f(U \cup \{v\}) - f(U)
    = \bigl(\wh G(J \cup U \cup \{v\}) - \wh G(J)\bigr)
      - \bigl(\wh G(J \cup U) - \wh G(J)\bigr)
    = \wh G(v \mid J \cup U),
  \end{equation*}
  which is nonnegative because $\wh G$ is nondecreasing.

  (ii) By construction \eqref{eq:trajectory}, $V$ is affine on each segment
  $[\ell(A_{j-1}), \ell(A_j)]$ with slope $\sigma_j = f(a_j \mid
  A_{j-1})/\ell_{a_j}$, and its value at the right endpoint of segment $j$
  telescopes to
  $V(\ell(A_j)) = f(A_{j-1}) + f(a_j \mid A_{j-1}) = f(A_j)$,
  which equals the value used at the left endpoint of segment $j+1$; hence
  $V$ is continuous. Each slope satisfies $\sigma_j \ge 0$ by (i) and
  $\ell_{a_j} > 0$, so $V$ is nondecreasing. Finally, $\ell(A_m) \ge r$ by
  \cref{lem:coverage}, so $[0, r]$ lies in the domain of $V$.

  (iii) Line 6 of \cref{alg:bps} selects, among all items $i \notin J \cup
  A_{j-1}$ that are feasible at the current state $S = J \cup A_{j-1}$, one
  maximizing $\wh G(i \mid S)/\ell_i$. By (i), $\wh G(v \mid J \cup A_{j-1})
  = f(v \mid A_{j-1})$ for every such $v$, so maximizing the density with
  respect to $\wh G$ is the same as maximizing it with respect to $f$; in
  particular the selected item $a_j$ satisfies
  $f(v \mid A_{j-1})/\ell(\{v\}) \le f(a_j \mid A_{j-1})/\ell_{a_j} =
  \sigma_j$ for every feasible $v$.
\end{IEEEproof}

\begin{lemma}[Properties of the bounding function]\label{lem:app-phi}
  The function $\varphi$ of \eqref{eq:phi-def} satisfies:
  \begin{enumerate}
    \item[(i)] $\varphi(0) = 0$; this holds on the first branch when
      $D_1 > 0$ and on the second branch when $D_1 = 0$.
    \item[(ii)] If $0 < D_1 < +\infty$, the two branches agree at $u = D_1$,
      so $\varphi$ is continuous on $[0, \infty)$.
    \item[(iii)] On the first branch $(0, D_1)$,
      $\varphi'(u) = \bigl(f(X_1) - \varphi(u)\bigr)/d_1$; on the second
      branch $(D_1, \infty)$,
      $\varphi'(u) = \bigl(f(P) - \varphi(u)\bigr)/D$.
  \end{enumerate}
\end{lemma}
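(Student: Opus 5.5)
The three items are elementary facts about the explicit formula \eqref{eq:phi-def}, so I will check each one by direct substitution. First I fix the range of $D_1$. Since $\rho_1 \ge \rho_2 \ge 0$, the case $\rho_2 > 0$ gives $\rho_1/\rho_2 \ge 1$, hence $D_1 = d_1\ln(\rho_1/\rho_2) \in [0,\infty)$. The case $\rho_2 = 0$ gives $D_1 = +\infty$. In particular $D_1 \ge 0$, so exactly one of the two branches governs $u = 0$.

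\textbf{Item (i).} If $D_1 > 0$, the point $u=0$ lies on the first branch and $\varphi(0) = f(X_1)(1-e^0) = 0$. If $D_1 = 0$, then $\rho_1 = \rho_2$, and the second branch gives $\varphi(0) = f(P) - \rho_2 D e^{0} = f(P) - \rho_2 D$. I then need the identity $f(P) = \rho_1 d_1 + \rho_2 d_2$, which follows from $f(P) = f(X_1) + f(X_2\mid X_1)$ and the definitions of $\rho_1, \rho_2$. With $\rho_1 = \rho_2$ this is $\rho_2 D$, so $\varphi(0) = 0$.

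\textbf{Item (ii).} Suppose $0<D_1<\infty$. Then $e^{-D_1/d_1} = \rho_2/\rho_1$. The first branch, evaluated at $D_1$ (as a limit from the left), equals
\[
f(X_1)\bigl(1-\rho_2/\rho_1\bigr) = \rho_1 d_1 - \rho_2 d_1 .
\]
The second branch at $D_1$ equals
\[
f(P) - \rho_2 D = \rho_1 d_1 + \rho_2 d_2 - \rho_2(d_1+d_2) = \rho_1 d_1 - \rho_2 d_1 .
\]
The two values agree. Each branch is continuous on its own interval, so $\varphi$ is continuous on $[0,\infty)$.

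\textbf{Item (iii).} On $(0,D_1)$, differentiating gives $\varphi'(u) = (f(X_1)/d_1)\,e^{-u/d_1}$. Also $f(X_1) - \varphi(u) = f(X_1)e^{-u/d_1}$, so $\varphi'(u) = \bigl(f(X_1)-\varphi(u)\bigr)/d_1$. On $(D_1,\infty)$, differentiating gives $\varphi'(u) = \rho_2 e^{-(u-D_1)/D}$. Also $f(P)-\varphi(u) = \rho_2 D e^{-(u-D_1)/D}$, so $\varphi'(u) = \bigl(f(P)-\varphi(u)\bigr)/D$. When $D_1 = +\infty$ the second branch is vacuous and only the first formula is needed.

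The only step that is more than differentiating an exponential is the identity $f(P) = \rho_1 d_1 + \rho_2 d_2$. Items (i) in the case $D_1 = 0$ and (ii) both depend on it, and it is immediate from how $\rho_1$ and $\rho_2$ are defined.
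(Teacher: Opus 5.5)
Your proposal is correct and follows the paper's proof essentially step for step. You use the same identity $f(P) = f(X_1) + f(X_2 \mid X_1) = \rho_1 d_1 + \rho_2 d_2$ to settle the $D_1 = 0$ case of (i) and the matching at $D_1$ in (ii), and direct differentiation for (iii). Your added remarks, that $D_1 \in [0,\infty]$ and that the second branch is vacuous when $D_1 = +\infty$, are harmless refinements of the same argument.
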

\begin{IEEEproof}
  We use throughout the chain-rule decomposition
  \begin{equation}\label{eq:app-fP-decomp}
    f(P) = f(X_1) + f(X_2 \mid X_1) = f(X_1) + \rho_2 d_2 = \rho_1 d_1 + \rho_2 d_2 .
  \end{equation}

  (i) If $D_1 > 0$, the first branch gives $\varphi(0) = f(X_1)(1 - e^0) =
  0$. If $D_1 = 0$, then by definition of $D_1$ we have $\rho_2 > 0$ and
  $d_1 \ln(\rho_1/\rho_2) = 0$, hence $\rho_1 = \rho_2$; the second branch
  then gives, using \eqref{eq:app-fP-decomp},
  $\varphi(0) = f(P) - \rho_2 D e^{0} = \rho_1 d_1 + \rho_2 d_2 - \rho_2(d_1
  + d_2) = 0$.

  (ii) Let $0 < D_1 < +\infty$, so $\rho_2 > 0$ and $e^{-D_1/d_1} =
  \rho_2/\rho_1$. The left limit at $D_1$ along the first branch is
  \begin{equation*}
    f(X_1)\bigl(1 - e^{-D_1/d_1}\bigr)
    = f(X_1)\Bigl(1 - \frac{\rho_2}{\rho_1}\Bigr)
    = f(X_1) - \rho_2 d_1,
  \end{equation*}
  where the last step uses $f(X_1) = \rho_1 d_1$. The value at $D_1$ on the
  second branch is $f(P) - \rho_2 D e^{0} = f(P) - \rho_2(d_1 + d_2)$, which
  equals $f(X_1) - \rho_2 d_1$ by \eqref{eq:app-fP-decomp}. The two branches
  agree, and each branch is continuous, so $\varphi$ is continuous.

  (iii) On $(0, D_1)$, differentiating $\varphi(u) = f(X_1)(1 -
  e^{-u/d_1})$ gives $\varphi'(u) = \frac{f(X_1)}{d_1} e^{-u/d_1}$, while
  $f(X_1) - \varphi(u) = f(X_1) e^{-u/d_1}$; dividing by $d_1$ matches. On
  $(D_1, \infty)$, differentiating $\varphi(u) = f(P) - \rho_2 D
  \exp\bigl(-\frac{u - D_1}{D}\bigr)$ gives $\varphi'(u) = \rho_2
  \exp\bigl(-\frac{u - D_1}{D}\bigr)$, while $f(P) - \varphi(u) = \rho_2 D
  \exp\bigl(-\frac{u - D_1}{D}\bigr)$; dividing by $D$ matches.
\end{IEEEproof}

\subsection{\texorpdfstring{Proof of \cref{lem:domination}}{Proof of the Trajectory-Domination Lemma}}
\label{app:proof-domination}

\begin{IEEEproof}
  The proof has three steps: a density lower bound valid on every segment
  that starts before $r$ (Step 1), a differential inequality for the
  difference $W = V - \varphi$ on each branch (Step 2), and a piecewise
  integrating-factor argument that propagates $W \ge 0$ across the finitely
  many breakpoints (Step 3). Step 4 checks that all placements of $D_1$
  relative to $[0, r]$ are covered.

  \emph{Step 1 (density lower bound).}
  Fix an accepted segment starting at $A_{j-1}$ with left endpoint $u_0 =
  \ell(A_{j-1}) < r$, and let $\sigma_j = f(a_j \mid A_{j-1})/\ell_{a_j}$ be
  the density of the item selected by \cref{alg:bps} on that segment. Let
  $Q \subseteq P$ be any set such that every member of $Q \setminus A_{j-1}$
  is feasible at this step; by \cref{lem:feasible}, this holds for
  \emph{every} $Q \subseteq P$, since $u_0 < r$. We claim
  \begin{equation}\label{eq:app-density-chain}
    f(Q)
    \;\le\; f(A_{j-1} \cup Q)
    \;\le\; f(A_{j-1}) + \!\!\sum_{v \in Q \setminus A_{j-1}}\!\! f(v \mid A_{j-1})
    \;\le\; f(A_{j-1}) + \sigma_j\, \ell(Q).
  \end{equation}
  The first inequality is monotonicity of $f$. For the second, enumerate $Q
  \setminus A_{j-1} = \{v_1, \dots, v_s\}$ in an arbitrary order and
  telescope:
  \begin{equation*}
    f(A_{j-1} \cup Q) - f(A_{j-1})
    = \sum_{k=1}^{s} f\bigl(v_k \mid A_{j-1} \cup \{v_1, \dots, v_{k-1}\}\bigr)
    \le \sum_{k=1}^{s} f(v_k \mid A_{j-1}),
  \end{equation*}
  where each summand is bounded via submodularity of $f$ (conditioning on a
  superset of $A_{j-1}$ can only decrease the marginal). For the third
  inequality, every $v \in Q \setminus A_{j-1}$ is feasible at this step, so
  \cref{lem:app-residual}(iii) gives $f(v \mid A_{j-1}) \le \sigma_j
  \ell(\{v\})$; summing over $Q \setminus A_{j-1}$ and using $\sigma_j \ge
  0$ together with $\ell(Q \setminus A_{j-1}) \le \ell(Q)$ (lengths are
  positive) yields the claim. Rearranging \eqref{eq:app-density-chain} and
  using $\ell(Q) > 0$,
  \begin{equation}\label{eq:slope-lower}
    \sigma_j \;\ge\; \frac{f(Q) - f(A_{j-1})}{\ell(Q)}
    \;\ge\; \frac{f(Q) - V(u)}{\ell(Q)}
    \qquad \text{for every } u \in [\ell(A_{j-1}), \ell(A_j)],
  \end{equation}
  where the second inequality holds because $V$ is nondecreasing
  (\cref{lem:app-residual}(ii)) with $V(\ell(A_{j-1})) = f(A_{j-1})$, so
  $V(u) \ge f(A_{j-1})$ on the whole segment. Since $X_1 \subseteq P$ and $P
  \subseteq P$, inequality \eqref{eq:slope-lower} is available both for $Q =
  X_1$ and for $Q = P$ on every segment starting before $r$.

  \emph{Step 2 (differential inequality on each branch).}
  Consider the difference $W(u) := V(u) - \varphi(u)$ on $[0, r]$. Both $V$
  and $\varphi$ are continuous (\cref{lem:app-residual}(ii) and
  \cref{lem:app-phi}(ii)), so $W$ is continuous. Partition $[0,
  \min(r, D_1)]$ (and, when $D_1 < r$, also $[D_1, r]$) by the finitely many
  trajectory breakpoints $\ell(A_0) < \ell(A_1) < \cdots$ falling in the
  respective interval. On the interior of each cell of this partition, $V$
  is affine with $V'(u) = \sigma_j$ for the segment index $j$ containing the
  cell, and $\varphi$ is differentiable (\cref{lem:app-phi}(iii)).

  \emph{First branch.} Let $u$ lie in the interior of a cell of $[0,
  \min(r, D_1)]$. The segment containing $u$ has left endpoint $u_0 \le u <
  r$, so \eqref{eq:slope-lower} applies with $Q = X_1$, and
  \cref{lem:app-phi}(iii) gives the ODE for $\varphi$. Subtracting,
  \begin{equation}\label{eq:app-W-branch1}
    W'(u) = V'(u) - \varphi'(u)
    \;\ge\; \frac{f(X_1) - V(u)}{d_1} - \frac{f(X_1) - \varphi(u)}{d_1}
    \;=\; -\frac{W(u)}{d_1}.
  \end{equation}

  \emph{Second branch.} Let $D_1 < r$ and let $u$ lie in the interior of a
  cell of $[D_1, r]$. Again the segment containing $u$ has left endpoint
  $u_0 < r$, so \eqref{eq:slope-lower} applies with $Q = P$, and the same
  subtraction gives
  \begin{equation}\label{eq:app-W-branch2}
    W'(u) \;\ge\; \frac{f(P) - V(u)}{D} - \frac{f(P) - \varphi(u)}{D}
    \;=\; -\frac{W(u)}{D}.
  \end{equation}

  \emph{Step 3 (integrating factor and propagation across breakpoints).}
  Fix a branch and write $\ell_Q$ for its rate constant ($\ell_Q = d_1$ on
  the first branch, $\ell_Q = D$ on the second). On the interior of each
  cell,
  \begin{equation}\label{eq:app-integrating-factor}
    \frac{\mathrm{d}}{\mathrm{d}u}\Bigl(e^{u/\ell_Q}\, W(u)\Bigr)
    = e^{u/\ell_Q}\Bigl(W'(u) + \frac{W(u)}{\ell_Q}\Bigr) \;\ge\; 0
  \end{equation}
  by \eqref{eq:app-W-branch1} or \eqref{eq:app-W-branch2}. Hence
  $e^{u/\ell_Q} W(u)$ is nondecreasing on the interior of each cell; since it
  is continuous on the whole branch interval and the partition has finitely
  many cells, it is nondecreasing on the entire branch interval. Therefore,
  if $W \ge 0$ at the left endpoint of the branch interval, then
  $e^{u/\ell_Q} W(u) \ge 0$, hence $W(u) \ge 0$, throughout that interval.

  It remains to check the left-endpoint values. On $[0, \min(r, D_1)]$ the
  left endpoint is $u = 0$, where $W(0) = V(0) - \varphi(0) = 0 - 0 = 0$ by
  \eqref{eq:trajectory} and \cref{lem:app-phi}(i). On $[D_1, r]$ (when $0 <
  D_1 < r$) the left endpoint is $u = D_1$, where the first-branch
  conclusion and continuity of $W$ give $W(D_1) \ge 0$. When $D_1 = 0$, the
  left endpoint of the second-branch interval is $u = 0$, and
  \cref{lem:app-phi}(i) again gives $W(0) = 0$.

  \emph{Step 4.}
  If $D_1 \ge r$ (including $D_1 = +\infty$, which occurs when $\rho_2 =
  0$), then $[0, r] \subseteq [0, \min(r, D_1)]$ and the first-branch
  argument alone gives $W \ge 0$ on $[0, r]$. If $D_1 = 0$, then only the
  second branch is active on $[0, r]$, with initial value $W(0) = 0$. If $0
  < D_1 < r$, the first-branch argument gives $W \ge 0$ on $[0, D_1]$ and
  the second-branch argument extends it to $[D_1, r]$. In every case $V(u)
  \ge \varphi(u)$ for all $u \in [0, r]$, which is the claim of
  \cref{lem:domination}.
\end{IEEEproof}

\subsection{\texorpdfstring{Proof of Lemma~\ref{lem:static}}{Proof of the Static-Bound Lemma}}
\label{app:proof-static}

We use the following standard inequality; we include its one-line proof for
completeness.

\begin{lemma}[Log-sum inequality]\label{lem:app-logsum}
  For positive reals $a_1, a_2, b_1, b_2$,
  \begin{equation}\label{eq:app-logsum}
    a_1 \ln\frac{b_1}{a_1} + a_2 \ln\frac{b_2}{a_2}
    \;\le\;
    (a_1 + a_2) \ln\frac{b_1 + b_2}{a_1 + a_2}.
  \end{equation}
\end{lemma}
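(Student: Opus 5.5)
The plan is to derive \eqref{eq:app-logsum} from the concavity of the logarithm, using Jensen's inequality with weights proportional to the $a_i$. Set $A := a_1 + a_2 > 0$ and define the weights $\theta_i := a_i/A$. These satisfy $\theta_1, \theta_2 > 0$ and $\theta_1 + \theta_2 = 1$.

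\emph{Step 1 (normalize).} Dividing the left-hand side of \eqref{eq:app-logsum} by $A$ gives
\begin{equation*}
  \theta_1 \ln\frac{b_1}{a_1} + \theta_2 \ln\frac{b_2}{a_2}.
\end{equation*}
This is a convex combination of $\ln x_1$ and $\ln x_2$ with $x_i := b_i/a_i > 0$. All arguments of $\ln$ are positive because $a_i, b_i > 0$.

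\emph{Step 2 (apply concavity).} Since $\ln$ is concave on $(0,\infty)$, Jensen's inequality gives
\begin{equation*}
  \theta_1 \ln x_1 + \theta_2 \ln x_2 \le \ln(\theta_1 x_1 + \theta_2 x_2).
\end{equation*}
The argument on the right simplifies because the $a_i$ cancel:
\begin{equation*}
  \theta_1 x_1 + \theta_2 x_2 = \frac{a_1}{A}\cdot\frac{b_1}{a_1} + \frac{a_2}{A}\cdot\frac{b_2}{a_2} = \frac{b_1 + b_2}{a_1 + a_2}.
\end{equation*}

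\emph{Step 3 (rescale).} Multiplying the resulting inequality by $A > 0$ preserves its direction and returns exactly \eqref{eq:app-logsum}.

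There is no real obstacle here. The only point needing care is that the weights $a_i/(a_1+a_2)$ are chosen so that the cancellation in Step 2 happens. An equivalent route, if one prefers to avoid Jensen, is the elementary bound $\ln y \le y - 1$. Apply it to $y_i := \frac{b_i (a_1+a_2)}{a_i (b_1+b_2)}$ and sum with weights $a_i$. The right-hand side then telescopes to $(a_1 + a_2) - (a_1 + a_2) = 0$, which after rearranging gives the same inequality.
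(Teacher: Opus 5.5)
Your proposal is correct and follows the paper's proof: Jensen's inequality for the concave $\ln$ with weights $a_i/(a_1+a_2)$ at the points $b_i/a_i$, then multiplying through by $a_1+a_2>0$. Your alternative route via $\ln y \le y-1$ also works, but it is not needed.
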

\begin{IEEEproof}
  Apply Jensen's inequality to the concave function $\ln$ with weights
  $\lambda_i = a_i/(a_1 + a_2)$ and points $u_i = b_i/a_i$:
  \begin{equation*}
    \frac{a_1}{a_1 + a_2}\ln\frac{b_1}{a_1} + \frac{a_2}{a_1 + a_2}\ln\frac{b_2}{a_2}
    \;\le\;
    \ln\Bigl(\frac{a_1}{a_1+a_2}\cdot\frac{b_1}{a_1} + \frac{a_2}{a_1+a_2}\cdot\frac{b_2}{a_2}\Bigr)
    = \ln\frac{b_1 + b_2}{a_1 + a_2}.
  \end{equation*}
  Multiplying both sides by $a_1 + a_2 > 0$ gives \eqref{eq:app-logsum}.
\end{IEEEproof}

\begin{IEEEproof}[Proof of Lemma~\ref{lem:static}]
  Abbreviate $C := \wh G(J)$ and $H := \wh G(T)$ for the duration of this
  proof. Since $J \cup P = T$, definition \eqref{eq:def-residual} gives
  \begin{equation}\label{eq:app-H-decomp}
    f(P) = \wh G(T) - \wh G(J) = H - C .
  \end{equation}

  \emph{Step 0 (reduction and common identities).}
  If $f(P) = 0$, then $H = C$ by \eqref{eq:app-H-decomp}, and since
  $\varphi \ge 0$ (both branches of \eqref{eq:phi-def} are nonnegative: the
  first is a nonnegative multiple of $1 - e^{-u/d_1} \ge 0$, and the second
  is bounded below by its value at $u = D_1$, which equals the first-branch
  value $f(X_1)(1 - e^{-D_1/d_1}) \ge 0$ by \cref{lem:app-phi}(ii)),
  \begin{equation*}
    C + \varphi(r) \ge C = H \ge \alpha H .
  \end{equation*}
  Assume from now on that $f(P) > 0$. Then $\rho_1 > 0$: otherwise $f(X_1) =
  0$, and the block order \eqref{eq:block-order} forces $f(X_2) \le
  \frac{d_2}{d_1} f(X_1) = 0$, whence by submodularity $f(X_2 \mid X_1) \le
  f(X_2) = 0$ and $f(P) = f(X_1) + f(X_2 \mid X_1) \le 0$, a contradiction.

  Define $(x, z)$ according to the block order:
  \begin{equation}\label{eq:app-xz-def}
    (x, z) =
    \begin{cases}
      \bigl(f(\{v^\star\}),\; f(R \mid \{v^\star\})\bigr), & X_1 = \{v^\star\},\\[0.3em]
      \bigl(f(v^\star \mid R),\; f(R)\bigr), & X_1 = R.
    \end{cases}
  \end{equation}
  In both orders the chain rule gives $x + z = f(P)$, so by
  \eqref{eq:app-H-decomp},
  \begin{equation}\label{eq:app-H-xz}
    H = C + x + z .
  \end{equation}
  In both orders we also have
  \begin{equation}\label{eq:app-C2x}
    C \ge 2x .
  \end{equation}
  Indeed, if $X_1 = \{v^\star\}$ then $x = f(\{v^\star\})$ and
  \eqref{eq:seed-two-marginals} applies directly with $v = v^\star$; if $X_1
  = R$ then submodularity gives $x = f(v^\star \mid R) \le
  f(\{v^\star\})$, and \eqref{eq:seed-two-marginals} gives $C \ge
  2f(\{v^\star\}) \ge 2x$.

  Finally, recall the block data in the two orders. If $X_1 =
  \{v^\star\}$: $d_1 = p$, $d_2 = r$, $\rho_1 = x/p$, $\rho_2 = z/r$. If
  $X_1 = R$: $d_1 = r$, $d_2 = p$, $\rho_1 = z/r$, $\rho_2 = x/p$. In both
  orders, whenever $x, z > 0$,
  \begin{equation}\label{eq:app-weighted-log}
    d_1 \ln\rho_1 + d_2 \ln\rho_2
    \;=\; p \ln\frac{x}{p} + r \ln\frac{z}{r},
  \end{equation}
  because the two summands on the left are exactly the two summands on the
  right, possibly in the opposite order.

  We now distinguish the same three structural cases as the refined
  analysis of \cite{kulik2021refined}; Step 4 verifies that they are
  exhaustive.

  \emph{Step 1 (Case 1: $r \ge D_1$).}
  Since $r < \infty$, the case condition forces $D_1 < \infty$, hence
  $\rho_2 > 0$ by the definition of $D_1$, and with $\rho_1 \ge \rho_2$ both
  block values are positive; reading off the block data above, this means
  \begin{equation*}
    x > 0 \quad\text{and}\quad z > 0
  \end{equation*}
  in both orders, so every logarithm below has a strictly positive
  argument. Because $r \ge D_1$, the second branch of \eqref{eq:phi-def}
  evaluates $\varphi(r)$, and by \eqref{eq:app-H-decomp} and
  \eqref{eq:app-H-xz},
  \begin{equation}\label{eq:app-case1-start}
    C + \varphi(r)
    = C + f(P) - \rho_2 D \exp\Bigl(-\frac{r - D_1}{D}\Bigr)
    = H - \rho_2 D \exp\Bigl(-\frac{r - D_1}{D}\Bigr).
  \end{equation}
  It therefore suffices to bound the exponential tail by $H/e$.

  First, combining \eqref{eq:app-C2x} with \eqref{eq:app-H-xz},
  \begin{equation}\label{eq:app-3x}
    H - z = C + x \ge 2x + x = 3x .
  \end{equation}
  Next we bound the weighted-log expression \eqref{eq:app-weighted-log}.
  Using $x \le (H - z)/3$ from \eqref{eq:app-3x} and the monotonicity of
  $\ln$, then splitting off $\ln 3$, then applying
  \cref{lem:app-logsum} with $(a_1, a_2) = (p, r)$ and $(b_1, b_2) = (H - z,
  z)$ (all four are positive, $p + r = D$, and $(H - z) + z = H$), and
  finally using $\ln 3 > 1$ with $p > 0$:
  \begin{equation}\label{eq:app-logsum-chain}
    \begin{aligned}
      p \ln\frac{x}{p} + r \ln\frac{z}{r}
      &\le p \ln\frac{H - z}{3p} + r \ln\frac{z}{r} \\
      &= -\,p \ln 3 + p \ln\frac{H - z}{p} + r \ln\frac{z}{r} \\
      &\le -\,p \ln 3 + D \ln\frac{H}{D} \\
      &\le -\,p + D \ln\frac{H}{D}.
    \end{aligned}
  \end{equation}
  Now rewrite the exponential tail. Using $r = D - p$,
  \begin{equation*}
    \rho_2 D \exp\Bigl(-\frac{r - D_1}{D}\Bigr)
    = D \exp\Bigl(\ln\rho_2 - \frac{D - p - D_1}{D}\Bigr)
    = D \exp\Bigl(-1 + \frac{p + D_1}{D} + \ln\rho_2\Bigr),
  \end{equation*}
  and since $D_1 = d_1 \ln(\rho_1/\rho_2)$ and $D = d_1 + d_2$,
  \begin{equation*}
    \frac{D_1}{D} + \ln\rho_2
    = \frac{d_1 \ln\rho_1 - d_1 \ln\rho_2 + D \ln\rho_2}{D}
    = \frac{d_1 \ln\rho_1 + d_2 \ln\rho_2}{D},
  \end{equation*}
  so that, by \eqref{eq:app-weighted-log},
  \begin{equation}\label{eq:app-tail-rewrite}
    \rho_2 D \exp\Bigl(-\frac{r - D_1}{D}\Bigr)
    = D \exp\Bigl(-1 + \frac{p + p\ln\frac{x}{p} + r\ln\frac{z}{r}}{D}\Bigr).
  \end{equation}
  Substituting the bound \eqref{eq:app-logsum-chain} into
  \eqref{eq:app-tail-rewrite} and using the monotonicity of $\exp$,
  \begin{equation}\label{eq:app-tail-bound}
    \rho_2 D \exp\Bigl(-\frac{r - D_1}{D}\Bigr)
    \;\le\; D \exp\Bigl(-1 + \frac{D \ln(H/D)}{D}\Bigr)
    = D \cdot \frac{1}{e} \cdot \frac{H}{D}
    = \frac{H}{e}.
  \end{equation}
  Combining \eqref{eq:app-case1-start} and \eqref{eq:app-tail-bound},
  \begin{equation*}
    C + \varphi(r) \;\ge\; H - \frac{H}{e} \;=\; \alpha H .
  \end{equation*}
  The boundary subcases are included: $r = D_1$ uses the second branch of
  \eqref{eq:phi-def}, which is defined for all $u \ge D_1$, and $D_1 = 0$
  poses no difficulty since only the case condition $r \ge D_1$ was used.

  \emph{Step 2 (Case 2: $r < D_1$ and $X_1 = \{v^\star\}$).}
  Here $x = f(\{v^\star\})$, $z = f(R \mid \{v^\star\})$, $d_1 = p$, and we
  write
  \begin{equation}\label{eq:app-delta}
    \delta := \frac{r}{p} > 0 .
  \end{equation}
  Note $x = \rho_1 p > 0$ because $\rho_1 > 0$ (Step 0). Since $r < D_1$,
  the first branch of \eqref{eq:phi-def} evaluates $\varphi(r)$:
  \begin{equation}\label{eq:app-case2-phi}
    \varphi(r) = x\bigl(1 - e^{-r/p}\bigr) = x\bigl(1 - e^{-\delta}\bigr).
  \end{equation}

  Suppose first $z > 0$. Then $\rho_2 = z/r > 0$ and the case condition $r
  < D_1 = p \ln\frac{x/p}{z/r}$ can be divided by $p > 0$ to read
  \begin{equation*}
    \delta \;<\; \ln\Bigl(\frac{x}{p} \cdot \frac{r}{z}\Bigr)
    \;=\; \ln\Bigl(\frac{x}{z}\,\delta\Bigr).
  \end{equation*}
  Exponentiating gives
  $e^{\delta} < \frac{x}{z}\delta$, i.e.,
  \begin{equation}\label{eq:app-case2-x-large}
    x \;>\; z\,\frac{e^{\delta}}{\delta} .
  \end{equation}
  We now chain the estimates; each line is justified below:
  \begin{equation}\label{eq:app-case2-chain}
    \begin{aligned}
      C + \varphi(r)
      &= C + x\bigl(1 - e^{-\delta}\bigr) \\
      &= \tfrac{2}{3}(C + x) + \Bigl[\tfrac{1}{3}(C + x) - x e^{-\delta}\Bigr] \\
      &\ge \tfrac{2}{3}(C + x) + x\bigl(1 - e^{-\delta}\bigr) \\
      &> \tfrac{2}{3}(C + x) + z\,\frac{e^{\delta} - 1}{\delta} \\
      &\ge \tfrac{2}{3}(C + x) + z \\
      &\ge \tfrac{2}{3}(C + x + z)
      = \tfrac{2}{3} H
      \;>\; \alpha H .
    \end{aligned}
  \end{equation}
  Line 2 only splits $C + x = \tfrac23(C+x) + \tfrac13(C+x)$ and regroups.
  Line 3 uses \eqref{eq:app-C2x}: $C \ge 2x$ implies $\tfrac{1}{3}(C + x)
  \ge x$, so the bracket is at least $x - x e^{-\delta}$. Line 4 uses
  \eqref{eq:app-case2-x-large} together with $1 - e^{-\delta} =
  e^{-\delta}(e^{\delta} - 1) > 0$:
  \begin{equation*}
    x\bigl(1 - e^{-\delta}\bigr)
    \;>\; z\,\frac{e^{\delta}}{\delta}\, e^{-\delta}\bigl(e^{\delta} - 1\bigr)
    \;=\; z\,\frac{e^{\delta} - 1}{\delta} .
  \end{equation*}
  Line 5 uses the elementary inequality $e^{\delta} \ge 1 + \delta$, which
  gives $(e^{\delta} - 1)/\delta \ge 1$, and $z > 0$. Line 6 uses $z \ge
  \tfrac{2}{3} z$, and the final equality is \eqref{eq:app-H-xz}. The last
  strict comparison uses $e < 3$: $\alpha = 1 - 1/e < 1 - 1/3 = 2/3$.

  If instead $z = 0$, no logarithm involving $z$ is ever formed: lines 1--3
  of \eqref{eq:app-case2-chain} are unchanged and give
  \begin{equation*}
    C + \varphi(r)
    \;\ge\; \tfrac{2}{3}(C + x) + x\bigl(1 - e^{-\delta}\bigr)
    \;\ge\; \tfrac{2}{3}(C + x)
    \;=\; \tfrac{2}{3} H
    \;>\; \alpha H,
  \end{equation*}
  using $x(1 - e^{-\delta}) \ge 0$ and $H = C + x + 0$ from
  \eqref{eq:app-H-xz}.

  \emph{Step 3 (Case 3: $r < D_1$ and $X_1 = R$).}
  Here $z = f(R)$, $x = f(v^\star \mid R)$, and $d_1 = \ell(R) = r$. Since
  $r < D_1$, the first branch of \eqref{eq:phi-def} evaluates $\varphi(r)$,
  and because the branch rate constant is $d_1 = r$,
  \begin{equation}\label{eq:app-case3-phi}
    \varphi(r) = f(X_1)\bigl(1 - e^{-r/d_1}\bigr) = z\bigl(1 - e^{-1}\bigr) = \alpha z .
  \end{equation}
  From \eqref{eq:app-C2x}, $C \ge 2x$, hence $3C \ge 2C + 2x$, i.e.,
  \begin{equation}\label{eq:app-case3-C}
    C \;\ge\; \tfrac{2}{3}(C + x) \;\ge\; \alpha (C + x),
  \end{equation}
  where the second inequality again uses $\tfrac{2}{3} > \alpha$ (from $e <
  3$) and $C + x \ge 0$. Adding \eqref{eq:app-case3-phi} and
  \eqref{eq:app-case3-C} and using \eqref{eq:app-H-xz},
  \begin{equation*}
    C + \varphi(r) \;\ge\; \alpha(C + x) + \alpha z \;=\; \alpha(C + x + z) \;=\; \alpha H .
  \end{equation*}

  \emph{Step 4.}
  The reduction $f(P) = 0$ was handled in Step 0, so assume $f(P) > 0$. If
  $D_1 \le r$ — which includes $D_1 = 0$, since $r > 0$ by
  \eqref{eq:app-positive-lengths} — we are in Case 1. If $r < D_1$ —
  which includes $D_1 = +\infty$, i.e., $\rho_2 = 0$ — we are in Case 2 or
  Case 3 according to whether the block order puts $\{v^\star\}$ or $R$
  first; these two options are exhaustive because $(X_1, X_2)$ is a
  permutation of $\bigl(\{v^\star\}, R\bigr)$. In every case $\wh G(J) +
  \varphi(r) \ge \alpha \wh G(T)$, which is the claim of \cref{lem:static}.
\end{IEEEproof}

\end{CJK*} 
\end{document}